\newcommand{\myparagraph}[1]{\textbf{\emph{#1}}.}
\newenvironment{lenumerate}[2][]
{\begin{enumerate}[label=(#2\arabic*),leftmargin=0.2in,itemindent=0.15in,#1]}
{\end{enumerate}}
\setlist*[enumerate,1]{label={\itshape\arabic*)}}
\newcommand{\paragraphswithstop}{%
\let\copyparagraph\paragraph%
\renewcommand\paragraph[1]{\copyparagraph{##1.}}%
}
\newsavebox{\boxifnotempty}
\newcommand{\displayifnotempty}[3]{\sbox\boxifnotempty{#2}\setbox0=\hbox{\usebox{\boxifnotempty}\unskip}%
\ifdim\wd0=0pt
\else
 #1\usebox{\boxifnotempty}#3%
\fi%
}
\newcommand{\ifempty}[2]{\setbox0=\hbox{#1\unskip}%
\ifdim\wd0=0pt%
 #2%
\fi%
}
\newcommand{\ifnotempty}[2]{\setbox0=\hbox{#1\unskip}%
\ifdim\wd0>0pt%
 #2%
\fi%
}
\newcommand*\newstoreddef[1]{
  \BeforeClosingMainAux{%
    \immediate\write\@auxout{%
      \string\restoredef{#1}{\csname #1\endcsname}%
    }%
  }%
}
\newcommand*{\restoredef}[2]{
  \expandafter\gdef\csname stored@#1\endcsname{#2}%
}
\newcommand*{\storeddef}[1]{
  \@ifundefined{stored@#1}{0}{\csname stored@#1\endcsname}%
}
\newcommand{\real}[1]{\mathbb{R}^{#1}{}}
\newcommand{\bmat}[1]{\begin{bmatrix}#1\end{bmatrix}}
\newcommand{\transpose}{^\mathrm{T}}
\newcommand{\inverse}{^{-1}}
\newcommand{\cross}[1]{[#1]_{\times}\!}
\DeclarePairedDelimiter{\norm}{\lVert}{\rVert}
\newcommand{\de}{\mathrm{d}}
\newcommand{\dert}[1][]{\frac{\de #1}{\de t}}
\newcommand{\vct}[1]{\mathbf{#1}}
\DeclareMathOperator{\diag}{diag}
\DeclareMathOperator*{\argmin}{\arg\!\min}
\DeclareMathOperator{\stack}{stack}
\DeclareMathOperator{\D}{D\!}
\newcommand{\intersect}{\cap}
\newcommand{\union}{\cup}
\providecommand{\vf}{\vct{f}}
\providecommand{\vn}{\vct{n}}
\providecommand{\vq}{\vct{q}}
\providecommand{\vr}{\vct{r}}
\providecommand{\vs}{\vct{s}}
\providecommand{\vu}{\vct{u}}
\providecommand{\vz}{\vct{z}}
\providecommand{\cE}{\mathcal{E}}
\providecommand{\cF}{\mathcal{F}}
\providecommand{\cI}{\mathcal{I}}
\providecommand{\cK}{\mathcal{K}}
\providecommand{\cL}{\mathcal{L}}
\providecommand{\cZ}{\mathcal{Z}}
\newcommand{\Fframe}[1]{{#1^{\cF}}}
\newcommand{\Eframe}[1]{{#1^{\cE}}}
\newcommand{\newcolorlabel}[2]{%
  \expandafter\newcommand\csname #1\endcsname[1]{%
    \colorbox{#2}{\color{white}\textsf{\textbf{##1}}}}%
}
\newcommand{\newcommenter}[2]{%
  \expandafter\newcommand\csname #1\endcsname[1]{%
    \fcolorbox{#2}{#2}{\color{white}\textsf{\textbf{#1}}}
    {\color{#2}##1}}%
  \expandafter\newcommand\csname at#1\endcsname{%
    \fcolorbox{#2}{#2}{\color{white}\textsf{\textbf{@#1}}}
    {\color{#2}}}%
  \expandafter\newcommand\csname #1hl\endcsname[2]{%
    \colorbox{#2}{\color{white}\textsf{\textbf{#1}}}\sethlcolor{Azure2}\hl{##2}~%
    \expandafter\ifx\csname commentarrow\endcsname\relax$\leftarrow$\else \commentarrow[#2]\fi~%
    {\color{#2}##1}}%
  \expandafter\newcommand\csname #1st\endcsname[2]{%
    \colorbox{#2}{\color{white}\textsf{\textbf{#1}}}\sout{##2}~%
    \expandafter\ifx\csname commentarrow\endcsname\relax$\leftarrow$\else \commentarrow[#2]\fi~%
    {\color{#2}##1}}%
}
\tikzset{
  dim above/.style={to path={\pgfextra{
        \pgfinterruptpath
        \draw[>=latex,|->|] let
        \p1=($(\tikztostart)!1.5em!90:(\tikztotarget)$),
        \p2=($(\tikztotarget)!1.5em!-90:(\tikztostart)$)
        in(\p1) -- (\p2) node[pos=.5,sloped,above]{#1};
        \endpgfinterruptpath
      }
    }
  },
  dim double above/.style={to path={\pgfextra{
        \pgfinterruptpath
        \draw[>=latex,|->|] let
        \p1=($(\tikztostart)!3em!90:(\tikztotarget)$),
        \p2=($(\tikztotarget)!3em!-90:(\tikztostart)$)
        in(\p1) -- (\p2) node[pos=.5,sloped,above]{#1};
        \endpgfinterruptpath
      }
    }
  },
  dim below/.style={to path={\pgfextra{
        \pgfinterruptpath
        \draw[>=latex,|->|] let 
        \p1=($(\tikztostart)!-1em!-90:(\tikztotarget)$),
        \p2=($(\tikztotarget)!-1em!90:(\tikztostart)$)
        in (\p1) -- (\p2) node[pos=.5,sloped,below]{#1};
        \endpgfinterruptpath
      }
    }
  },
}
\tikzset{
    right angle quadrant/.code={
        \pgfmathsetmacro\quadranta{{1,1,-1,-1}[#1-1]}     
        \pgfmathsetmacro\quadrantb{{1,-1,-1,1}[#1-1]}},
    right angle quadrant=1, 
    right angle length/.code={\def\rightanglelength{#1}},   
    right angle length=2ex, 
    right angle symbol/.style n args={3}{
        insert path={
            let \p0 = ($(#1)!(#3)!(#2)$) in     
                let \p1 = ($(\p0)!\quadranta*\rightanglelength!(#3)$), 
                \p2 = ($(\p0)!\quadrantb*\rightanglelength!(#2)$) in 
                let \p3 = ($(\p1)+(\p2)-(\p0)$) in  
            (\p1) -- (\p3) -- (\p2)
        }
    }
}
\newcommand{\pgfextractangle}[3]{%
    \pgfmathanglebetweenpoints{\pgfpointanchor{#2}{center}}
                              {\pgfpointanchor{#3}{center}}
    \global\let#1\pgfmathresult  
}
\newcommand{\commentarrow}[1][Azure4]{\tikz[baseline=-3pt]{\node[shape border uses incircle, fill=#1,rotate=180,single arrow, inner sep=1pt, minimum size=6pt, single arrow head extend=2pt]{};}}
\tikzset{ax/.style={-latex,line width=2pt}}
\tikzset{camera/.style={fill=Sienna1,fill opacity=0.5},%
image plane/.style={draw=RoyalBlue3,line width=2pt}}
\newcommand{\rrtstar}{$\texttt{RRT}^\texttt{*}$}
\newtheorem{constraint}{ADMM constraint}
\newtheorem{example}{Example}
\DeclareMathAlphabet{\mathcal}{OMS}{cmsy}{m}{n}
\def\sZ{\mathcal{Z}}
\title{\LARGE \bf

Enhancing Security in Multi-Robot Systems through Co-Observation Planning, Reachability Analysis, and Network Flow}
\author{Ziqi Yang, Roberto Tron \IEEEmembership{Member, IEEE} 
\thanks{This project is supported by the National Science Foundation grant "CPS: Medium: Collaborative Research: Multiagent Physical Cognition and Control Synthesis Against Cyber Attacks" (Award number 1932162).}
\thanks{Ziqi Yang is with the Department of Systems Engineering,
Boston University, Boston, MA 02215 USA (e-mail: zy259@bu.edu).
}
\thanks{Roberto Tron is with the Faculty of Mechanical Engineering and Systems Engineering, Boston University, Boston, MA 02215 USA (e-mail:
tron@bu.edu).}}
\begin{document}

\maketitle
\thispagestyle{empty}
\pagestyle{empty}


\begin{abstract}
We address security challenges in multi-robot systems (MRS) where adversaries may take control of compromised robots to gain unauthorized physical access to forbidden regions. We propose a novel multi-robot optimal planning algorithm that integrates mutual observations and introduces \emph{reachability constraints} and \emph{co-observations} between robots to enhance security. Our formulation can guarantee that, even with adversarial movements, compromised robots cannot breach forbidden regions without missing scheduled co-observations. We use ellipsoidal over-approximations to reachability regions for efficient intersection checking and gradient computation. Furthermore, to enhance system resilience and tackle feasibility challenges, we introduce the notion of \emph{sub-teams}; these cohesive units replace individual robots along each trajectory; we plan \emph{cross-trajectory co-observation} that use redundant robots to switch between different trajectories, securing multiple sub-teams without requiring modifications to the plans. We formulate cross-trajectories plans by solving a \emph{network-flow vertex path coverage problem} on the \emph{checkpoint graph} generated from the original unsecured MRS trajectories, providing the same security guarantees against plan-deviation attacks. We demonstrate the effectiveness and robustness of our proposed algorithm through simulations, which significantly strengthen the security of multi-robot systems in the face of adversarial threats.
\end{abstract}
\begin{IEEEkeywords}
  Multi-robot system, cyber-physical attack, trajectory optimization, reachability analysis, network flow
\end{IEEEkeywords}

\section{Introduction}\label{sec:introduction}
Multi-robot systems (MRS) have found wide applications in various fields. Although offering numerous advantages, the distributed nature and dependence on network communication make the MRS vulnerable to cyber threats, such as unauthorized access, malicious attacks, and data manipulation \cite{brunner2010infiltrating}. This paper addresses a specific scenario in which robots are compromised by physically masquerading attackers and directed toward \emph{forbidden regions}, which may contain security-sensitive equipment or human workers. A countermeasure to these \emph{plan-deviation attacks} \cite{wardega2019resilience, wardega2023byzantine, wardega2023hola} is to use the onboard sensing capabilities of the robots themselves to perform inter-robot \emph{co-observations} and detect unusual behavior. These mutual observations establish a \emph{co-observation schedule} alongside the planned path, ensuring that any attempts by a compromised robot to violate safety constraints (such as transgressing into forbidden regions) would break the observation plan and be promptly detected.

A preliminary version of this approach is presented in \cite{yang2020multi,yang2021multi}. That work extended the grid-world solution from \cite{wardega2019resilience} to continuous configuration space, and incorporated the co-observation planning as constraints in the alternating direction method of multipliers (ADMM)-based trajectory optimization solver to accommodate for more flexible objectives. However, such plans are not guaranteed to exist, and the secured trajectory always comes at the cost of overall system performance. As an extension of the previous works \cite{yang2020multi,yang2021multi}, we introduce reachability analysis and network flow to the security problem to find a balance between security and performance.

In this paper, we incorporate additional reachability analysis during the planning phase, incorporating constraints based on sets of locations that agents could potentially reach, referred to as \emph{reachability regions}, as a novel perspective to the existing literature. This approach enforces an empty intersection between forbidden regions and the reachability region during trajectory optimization, preventing undetected attacks if an adversary gains control of the robots. We utilize an ellipsoidal boundary to constrain the search space and formulate the ellipsoid as the reachability region constraint. We present a mathematical formulation of the reachability regions as spatio-temporal constraints compatible with the solver in \cite{yang2020multi}. 

Furthermore, we address the feasibility and optimality challenges. For a MRS with unsecured optimal trajectories (without security constraints), we introduce redundant robots and form them into \emph{sub-teams} with the original ones. These redundant robots are assigned to establish additional co-observations, termed \emph{cross-trajectory co-observations}, within and across different sub-teams. The proposed algorithm focuses on the movement plans for the additional robots, distinct from those dedicated to task objectives, indicating when they should stay with their current sub-team and when they should deviate to join another. This strategy allows sub-teams to preserve the optimal unsecured trajectories (as illustrated in \cref{fig:cross-traj-comparison-set}) without requiring the entire \emph{sub-team} to maintain close proximity to other teams during co-observation events. The cross-trajectory co-observation problem is transformed as a Multi-Agent Pathfinding (MAPF) problem on roadmap (represented as directed graph) and solved as a network flow problem \cite{yu2013multi}.

\begin{figure}
	\centering
    \subfloat[Plan-deviation attack\label{fig:example-plan-dev}]{\includegraphics[width = 0.32\linewidth]{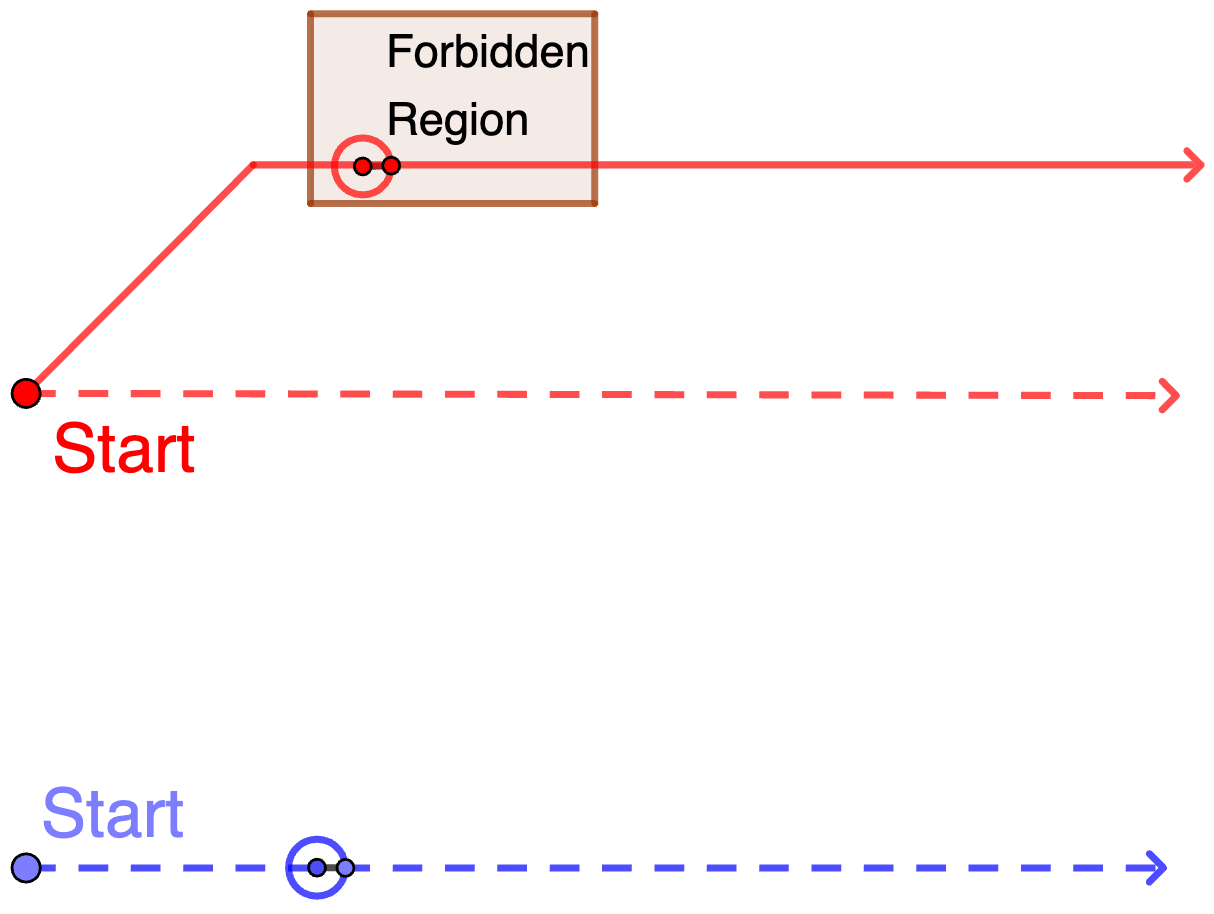}}
    \subfloat[Co-observation \label{fig:example-co-observation}]{\includegraphics[width=0.32\linewidth]{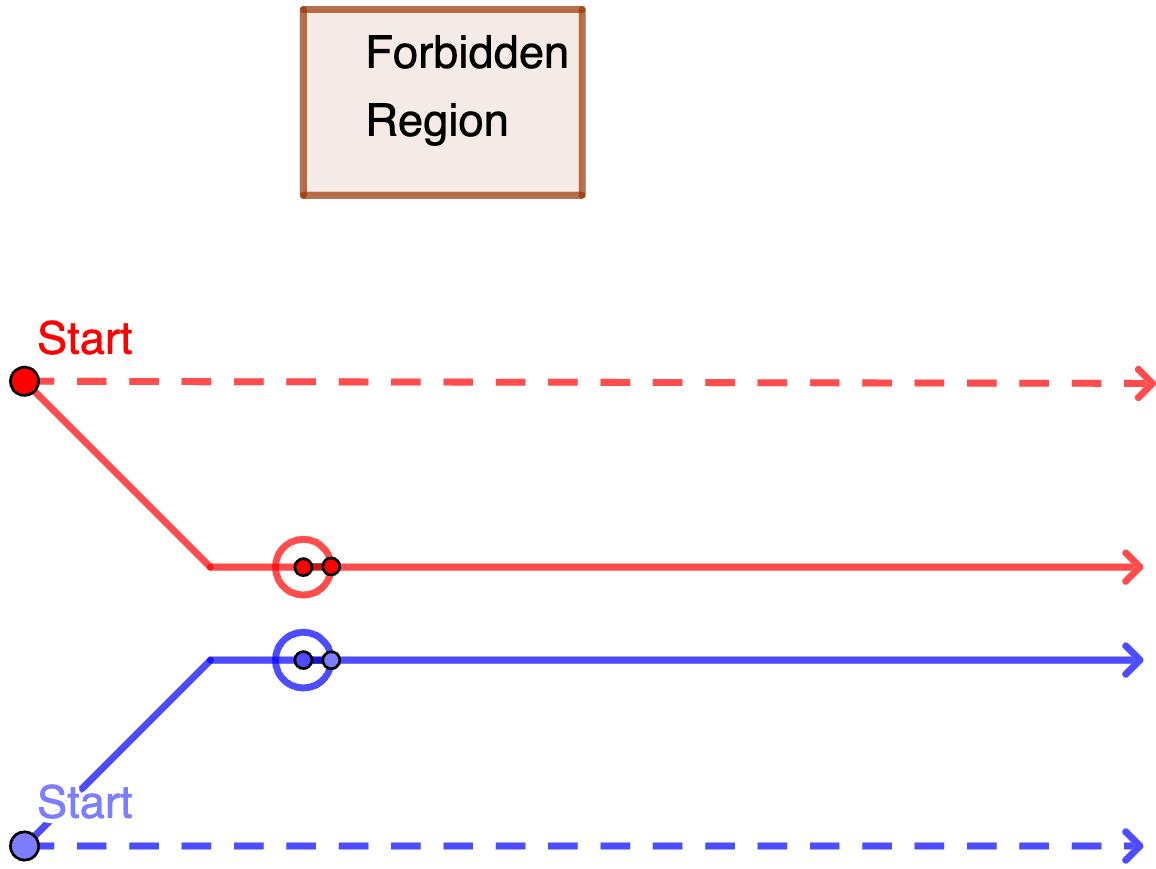}}
    \subfloat[Cross-trajectory co-observation \label{fig:example-cross-traj}]{\includegraphics[width=0.32\linewidth]{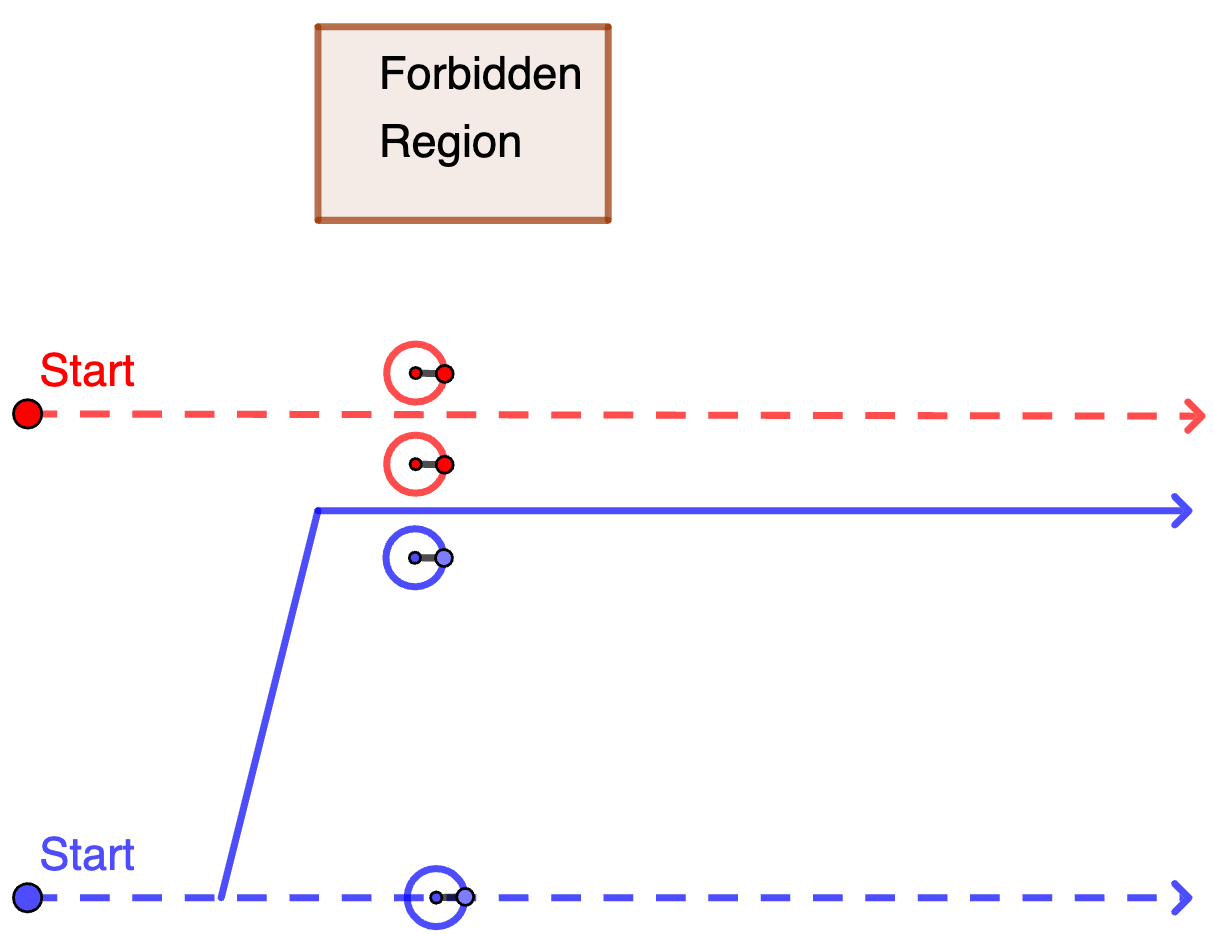}}
    
    \caption{ (\ref{fig:example-plan-dev}) The attacker deviates the red robot into a forbidden region.(\ref{fig:example-co-observation}) Red and blue robots are scheduled to co-observe each other during the task, resulting in a new secured trajectory (solid lines) replacing the optimal ones (dashed line). (\ref{fig:example-cross-traj}) The blue team sends one robot to observe the red team (solid blue line) while having the rest of the robots follow the optimal trajectory.}\label{fig:cross-traj-comparison-set}
\end{figure}

\noindent\myparagraph{Related research} The trajectory planning problem of MRS remains as a subject of intense study for many decades, and optimization is a common approach in such areas. Optimization-based approaches are customizable to a variety of constraints (e.g. speed limit, avoiding obstacles) and task specifications (e.g. maximum surveillance coverage, minimal energy cost). In contrast to our work in \cref{sec:ADMM-planning}, many motion planning tasks require a non-convex constraint problem formulation, most contributors focused on convex problems, and only allow for a few types of pre-specified non-convex constraints through convexification\cite{liu2014solving}\cite{VanParys2016}\cite{Schulman2014}. Several optimization techniques like mixed integer programs with quadratic terms (MIQP)~\cite{mellinger2012mixed} and ADMM~\cite{bento2013message} have been used to reduce computational complexity and to incorporate more complex non-convex constraints.

Graph-based search is another extensively explored approach in trajectory planning problems, such as formation control \cite{tanner2004leader,hu2019distributed} and MAPF problems \cite{stern2019multi}. In traditional MAPF formulations, environments are abstracted as graphs, with nodes representing positions and edges denoting possible transitions between positions, and solved as a network flow problem \cite{yu2013multi,yu2016optimal}. This formulation allows for the application of combinatorial network flow algorithms and linear program techniques, offering efficient and more flexible solutions to the planning problem.

Reachability analysis is essential for security and safety verification in cyber-physical systems (CPSs) \cite{gueguen2009safety, ding2020secure}, often involving an over-approximation of reachable space to verify safety properties. Geometric methods like zonotopes and ellipsoids are commonly used to enclose reachability sets compactly \cite{kurzhanski2000ellipsoidal, lakhal2019interval, maiga2015comprehensive}. For online safety assessments against cyber attacks, \cite{kwon2017reachability} compute reachable CPS states under attacks and compared them with a safe region based on state estimation. By incorporating an additional security measure that provides two secured states, the reachability region in our work is over-approximated using ellipsoids. This is inspired by the ellipsoidal \emph{heuristic sampling domain} in \cite{gammell2014informed} for the $\mathtt{RRT^*}$ algorithm to simplify the sampling region between start and goal locations. Ellipsoids are also used in other path-planning methods like Iterative Regional Inflation by Semidefinite programming (IRIS) \cite{deits2015computing, ray2022free} and the Safe Flight Corridor (SFC) \cite{liu2017planning, fan2024flying}. Differing from the reachability region that requires mapping all reachable states given several known states, the ellipsoids of IRIS and SFC focus on approximating the safe collision-free space rather than addressing security applications.

\noindent\myparagraph{Paper contributions} 
Two main contributions have been presented. 
\begin{itemize}
  \item We present an innovative method to integrate reachability analysis into the ADMM-based optimal trajectory solver for multi-robot systems, preventing attackers from executing undetected attacks by simultaneously entering forbidden regions and adhering to co-observation schedules.
  \item We introduce additional robots to form \emph{sub-teams} for both intra-sub-team and cross-sub-team co-observations. A new co-observation planning algorithm is formulated that can generate a resilient multi-robot trajectory with a co-observation plan that still preserves the optimal performance against arbitrary tasks. We also find the minimum redundant robots required for the security.
\end{itemize}

The rest of this paper is organized as follows. \cref{sec:ADMM-planning} introduces the ADMM-based optimal trajectory solver and the security constraints. \cref{sec:cross-trajectory} introduces the enhanced security planning algorithm through cross-trajectory co-observation. \cref{sec:summary} concludes this article.

\noindent\myparagraph{Notation} 
In this paper, we use non-bold symbols to denote single-agent states (e.g. $q_{ij}$) and scalars, and bold symbols to represent aggregated states of single robot and multiple robots (e.g. $\vq$).

\section{Secured Multi-robot trajectory planning}\label{sec:ADMM-planning}
We formulate the planning problem as an optimal trajectory optimization problem to minimize arbitrary smooth objective functions. We denote the trajectory as $\vq_i = [q_{i0}\dots q_{iT}]\in\real{m\times T}$, where $q_{ij}\in\real{m}$ is the waypoint of agent $i$ in a $m$ dimensional state space, $T$ is the time horizon. For a total of $n_p$ robots, trajectories can be represented as an aggregated vector $\vq = \stack(\vq_1,\dots,\vq_{n_p})\in \real{m n_p\times T}$, where $\stack(\cdot)$ denotes the vertical stacking operation. 
The overall goal is to minimize or maximize an objective function $\varPhi(\vq):\real{n m T} \to \real{}$ under a set of nonlinear constraints described by a set $\Omega$, which is given by the intersection of spatio-temporal sets given by traditional path planning constraints and the security constraints (co-observation schedule, reachability analysis). Formally:
\begin{equation}\label{eq:general-problem}
	\begin{split}
		\min/\max & \quad \varPhi(\vq)\\
		\textrm{subject to} &\quad \vq \in \Omega.
	\end{split}
\end{equation}
To give a concrete example of the cost $\varPhi$ and the set $\Omega$, we introduce a representative application that will be used for all the simulations throughout the paper.

\begin{example}\label{example:map_exploration}
Robots are tasked with navigating an unknown task space to collect sensory data and reconstruct a field (as illustrated in~\cref{fig:illustration}). The task space is modeled as a grid, where each grid point has an associated value tracked by a Kalman Filter (KF) \cite{anderson2012optimal}. The KF estimates uncertainty at each point through its covariance $P_j$ updated based on the measurements taken by robots along the trajectory $\vq$. Measurement quality, modeled by a Gaussian radial basis function, decreases with distance from the robot. The optimization objective $\varPhi(\vq)=\max_j P_j(\vq)$ is to minimize the maximum uncertainty in reconstructing the field (detailed in \cite{yang2020multi}). 
\end{example}
\begin{figure}
  \centering
  \subfloat[Map exploration task\label{fig:illustration}]{\includegraphics[width = 0.45\linewidth,, trim = 0.3cm 1.3cm 0.3cm 2cm,valign=c]{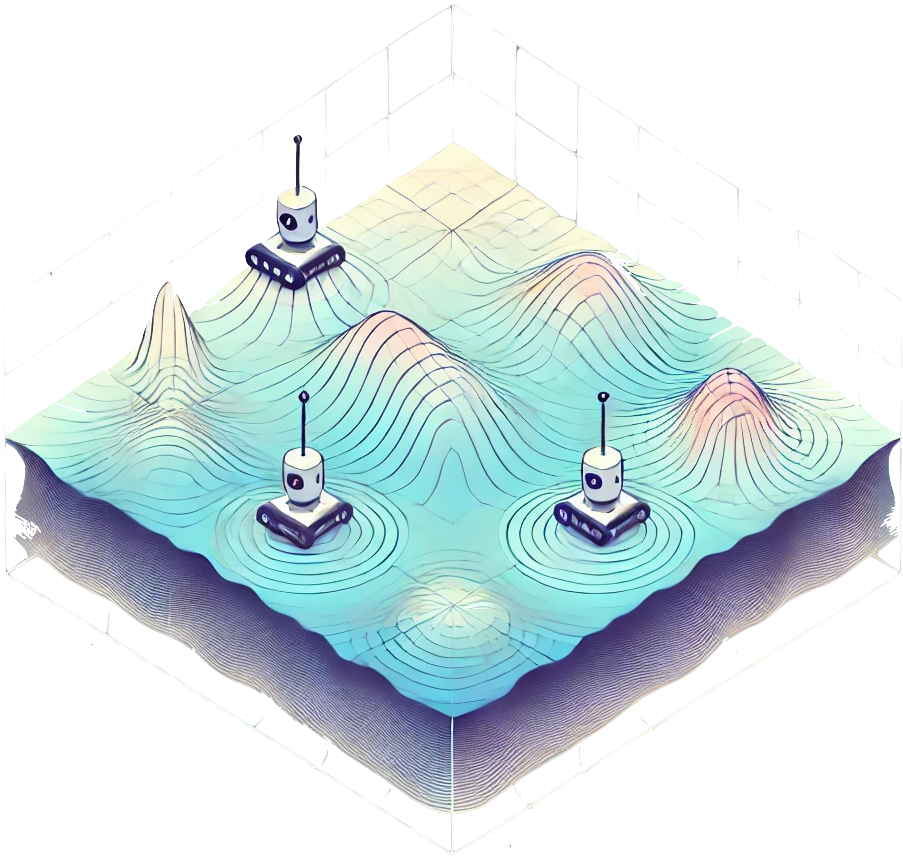}} 
  \subfloat[Continuous world trajectory \label{fig:SecurityBreak}]{\includegraphics[width=0.45\linewidth, trim = 0.3cm 1.3cm 0.3cm 2cm, valign=c]{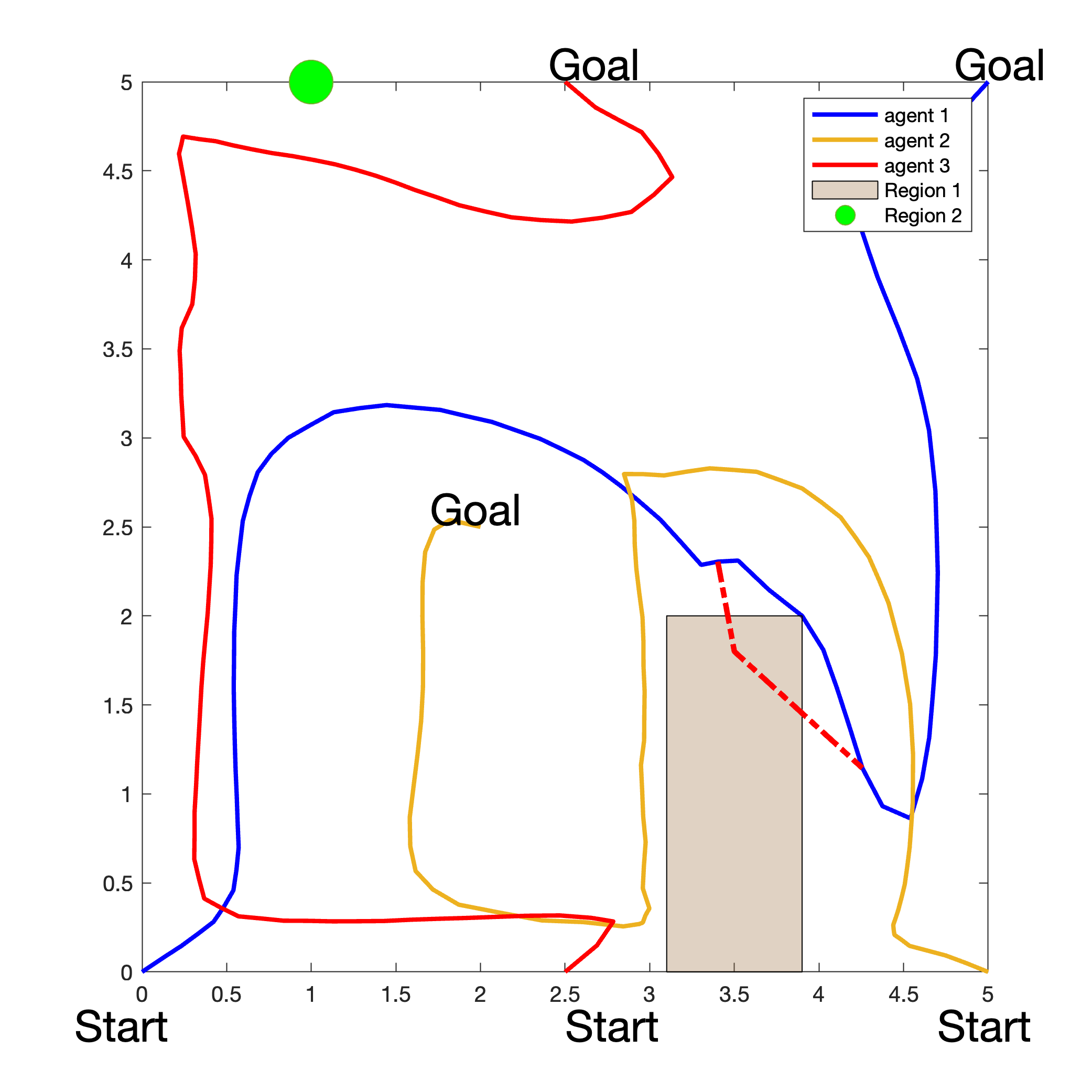}}
  \caption{(\ref{fig:illustration}) Illustration of a 3 robot map exploration task case. (\ref{fig:SecurityBreak}) The unsecured trajectory design optimized for a map exploration task. Potential security breaches, indicated by red dashed lines, highlight paths that could allow unauthorized access to forbidden regions.}
\end{figure}

 Ideally, robots are expected to spread out in a Boustrophedon pattern to efficiently explore the task space, optimize coverage, and minimize reconstruction uncertainty (\cref{fig:SecurityBreak}). However, we consider the possibility of threats following the \emph{physical masquerade attack} model \cite{wardega2019resilience}, where a compromised insider (robot) masquerading as a properly functioning agent, attempts to gain access to unauthorized locations without detection. The attacker leverages full knowledge of the motion plan and exploits the compromised robot to provide false self-reports to the central entity (CE). These malicious deviations remain undetected as long as observations from uncompromised robots align with the motion plan. The corresponding security requirement can then be formally defined as:
\begin{definition}\label{def:secured-plan}
  A multi-robot trajectory plan is secured against plan-deviation attacks if it ensures that any potential deviations to these forbidden regions will cause the corresponding robot to miss their next co-observation with other robots. 
\end{definition}

\subsection{Preliminaries}
\subsubsection{Differentials}
We define the differential of a map $f(x):\real{d_1}\to\real{d_2}$ at a point $x_0$ as the unique matrix $\partial_x f \in\real{d_1\times d_2}$ such that
\begin{equation}\label{equ:dt_to_dx}
  \left.\dert f\bigl(x(t)\bigr)\right|_{t=0}=\partial_x f\bigl(x(0)\bigr) \dot{x}(0),
\end{equation}
where $t\mapsto x(t)\in\real{d_1}$ is a smooth parametric curve such that $x(0)=x$ with any arbitrary tangent $\dot{x}(0)$. For brevity, we will use $\dot f$ for $\dert f$ and $\partial_x f$ for $\frac{\partial f}{\partial x}$. 

With a slight abuse of notation, we use the same notation $\partial_xf$ for the differential of a matrix-valued function with scalar arguments $f:\real{d_1 \times d_3}\to\real{d_2}$.  Note that in this case \eqref{equ:dt_to_dx} is still formally correct, although the RHS needs to be interpreted as applying $\partial_x f$ as a linear operator to $\dot{x}$.

\subsubsection{Alternating Directions Method of Multipliers (ADMM)}\label{chapter:ADMM review}
The basic idea of the ADMM-based solver introduced in \cite{yang2020multi} is to separate the constraints from the objective function using a different set of variables $\vz$, then solve separately in \eqref{eq:general-problem}.
More specifically, we rewrite the constraint $\vq\in\varOmega$ using an indicator function $\varTheta$ and include it in the objective function (details can be found in \cite{yang2020multi}). 
We allow $\vz = D(\vq)$ to replicate an arbitrary function of the main variables $\vq$ (instead of being an exact copy in general ADMM formulation), to transform constraint $\vq\in\Omega$ to $D(\vq) \in \sZ$ to allow for an easier projection step in \cref{eq:z-update}. In summary, we transform \cref{eq:general-problem} into
\begin{equation}\label{eq:ADMMSetConstraint_modified}
	\begin{aligned}
		&\max\quad \varPhi(\vq)+\varTheta(\vz), \\
		& \begin{array}{r@{\quad}c}
			s.t.& D(\vq)-\vz=0,
		\end{array} 
	\end{aligned}
\end{equation}
where $D(\vq)= [D_1(\vq)^T,\dots,D_l(\vq)^T]^T$ is a vertical concatenation of different functions for different constraints. This makes each constraint set $\mathcal{Z}_i$ independent and thus can be computed separately in later updating steps. $D_i(\vq)$ is chosen such that the new constraint set~$\cZ_i$ becomes simple to compute which is illustrated in later sections.

The update steps of the algorithm are then derived as \cite{Boyd2011}:
\begin{subequations}\label{eq:ADMMupdate}
	\begin{align}
		\vq^{k+1}&:=\argmin_\vq(\varPhi(\vq^k) +\frac{\rho}{2}\norm{D(\vq)-\vz^k+\vu^k}_2^2),\label{eq:admm-mod-update-q}\\
		\vz^{k+1}&:=\Pi_\mathcal{Z}(D(\vq^{k+1})+\vu^k), \label{eq:z-update} \\
		\vu^{k+1}&:=\vu^k+D(\vq^{k+1})-\vz^{k+1}, \label{eq:u-update}
	\end{align}
\end{subequations}
where $\Pi_\mathcal{Z}$ is the new projection function to the modified constraint set $\mathcal{Z}$, $\vu$ represents a scaled dual variable that, intuitively, accumulates the sum of primal residuals
\begin{equation}\label{eq:primal-residual}
	\vr^{k}=D(\vq^{k+1})-\vz^{k+1}.
\end{equation}
Checking the primal residuals alongside the dual residuals 
\begin{equation}\label{eq:dual-residual}
	\vs^{k}=-\rho(\vz^{k}-\vz^{k-1})
\end{equation}
after each iteration, the steps are reiterated until convergence when the primal and dual residuals are small, or divergence when primal and dual residuals remain large after a fixed number of iterations.
\begin{remark}
In practical application, \eqref{eq:admm-mod-update-q} is solved iteratively via a nonlinear optimization solver such as \emph{fmincon}~\cite{MATLAB:fmincon}. The use of explicit gradients of the constraint functions (derived below) greatly enhances the solver's efficiency and accuracy (with respect to the use of numerical differentiation). At the same time, \eqref{eq:z-update} and~\eqref{eq:u-update} have closed form solutions.
\end{remark}

We now provide the functions $D(\vq)$, the sets $\cZ$, and the corresponding projection operators $\Pi_\cZ$ for security constraints including co-observation security constraints (\cref{sec:co-observation-constraint}), and reachability constraints (\cref{sec:ellipsoid-point}-\cref{eq:region_ellipsoid_constraint}). The latter is based on the definition of \emph{ellipse-region-constraint} (\cref{sec:reachability}).
Formulation of traditional path planning constraints like velocity constraints and convex obstacle constraints can be found in \cite{yang2020multi}, thus are omitted in this paper.

\subsection{Co-observation schedule constraint}\label{sec:co-observation-constraint}
The co-observation constraint ensures that two robots come into close proximity at scheduled times to observe each other's behavior. This constraint is represented as a relative distance requirement between the two robots at a specific time instant, ensuring they are within a defined radius to inspect each other or exchange data.
\begin{constraint}[Co-observation constraint]\label{constraint:coobservation}
\begin{align}
D(\vq) &= \overrightarrow{q_{aj}q_{bj}}, \label{eq:coobservation_constraint}\\
  \sZ &= \{\vz \mid \norm{\vz} \leq d_{max} \},\\
   \Pi_\sZ(z) & = \begin{cases}
d_{max}\frac{\vz}{\norm{\vz}} &\text{if } \norm{\vz} > d_{max},\\
\vz	& otherwise,
\end{cases}
\end{align}
where $a,b$ are the indices of the pair of agents required for a mutual inspection, $d_{max}$ is the maximum distance between a pair of robots that the security can be ensured through co-observation.
\end{constraint}
The locations $q_{aj}$ and $q_{bj}$ where the co-observation is performed are computed as part of the optimization.

\subsection{Definition of ellipsoidal reachability regions}\label{sec:reachability}

In this section, we define \emph{ellipsoidal reachability regions} with respect to a pair of locations along a trajectory. We then use the ellipsoids to introduce various reachability constraints with respect to other geometric entities, such as points, lines, line segments, and polygons. These constraints, combined with the co-observation constraint~\ref{constraint:coobservation}, form the basis of the secured ADMM planning is presented in \cref{chapter:ADMM review}.

\begin{definition}\label{sec:ellipsoidal definition}
Consider a robot $i$ starting from $q_{1}$ at time $t_1$ and reaching $q_{2}$ at time $t_2$. The \emph{reachability region} between $t_1$ and $t_2$ is defined as the set of points $q'$ in the free configuration space for which there exists a kinematically feasible trajectory containing $q'$.
\end{definition}

For simplicity, in this paper we consider only a maximum velocity constraint $v_{max}$; the reachability region for $q'$ can then be over-approximated by the following (the over-approximation is obtained by neglecting physical obstacles):·
\begin{definition}\label{def:Reachability}
	The \emph{reachability ellipsoid $\cE$} is defined as the region $\mathcal{E}(q_1,q_2,t_{1},t_{2})=\{\tilde{q}\in\mathbb{R}^n: d(q_1,\tilde{q})+d(\tilde{q},q_2)<2a\}$, where $a=\frac{v_{max}}{2}(t_2-t_1)$, and $d(\cdot,\cdot)$ denotes the Euclidean distance between two points.
\end{definition}
This region is an ellipsoid because it represents the set of points $q'$ whose sum of distances two \emph{foci} $q_1$ and $q_2$ is less than $2a$, where $a$ is the major radius of the ellipsoid. 
 \begin{figure}
    \centering
    \subfloat[Showcase of a reachability region\label{fig:EllipseConstraintExample}]{\includegraphics[width=0.47\linewidth]{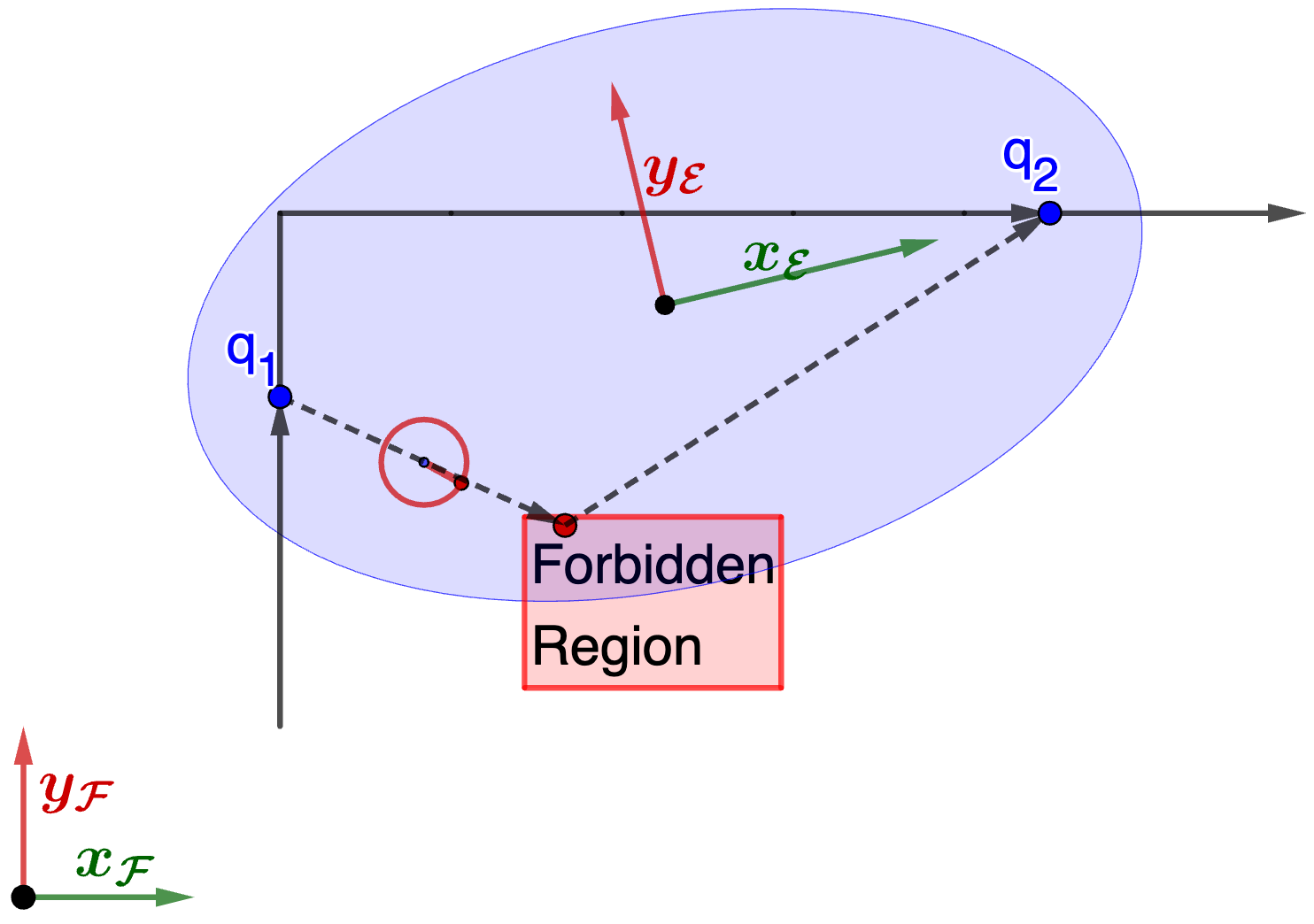}}
    \subfloat[Point-ellipsoid constraint \label{fig:Ellipse-to-point}]{\includegraphics[width=0.47\linewidth]{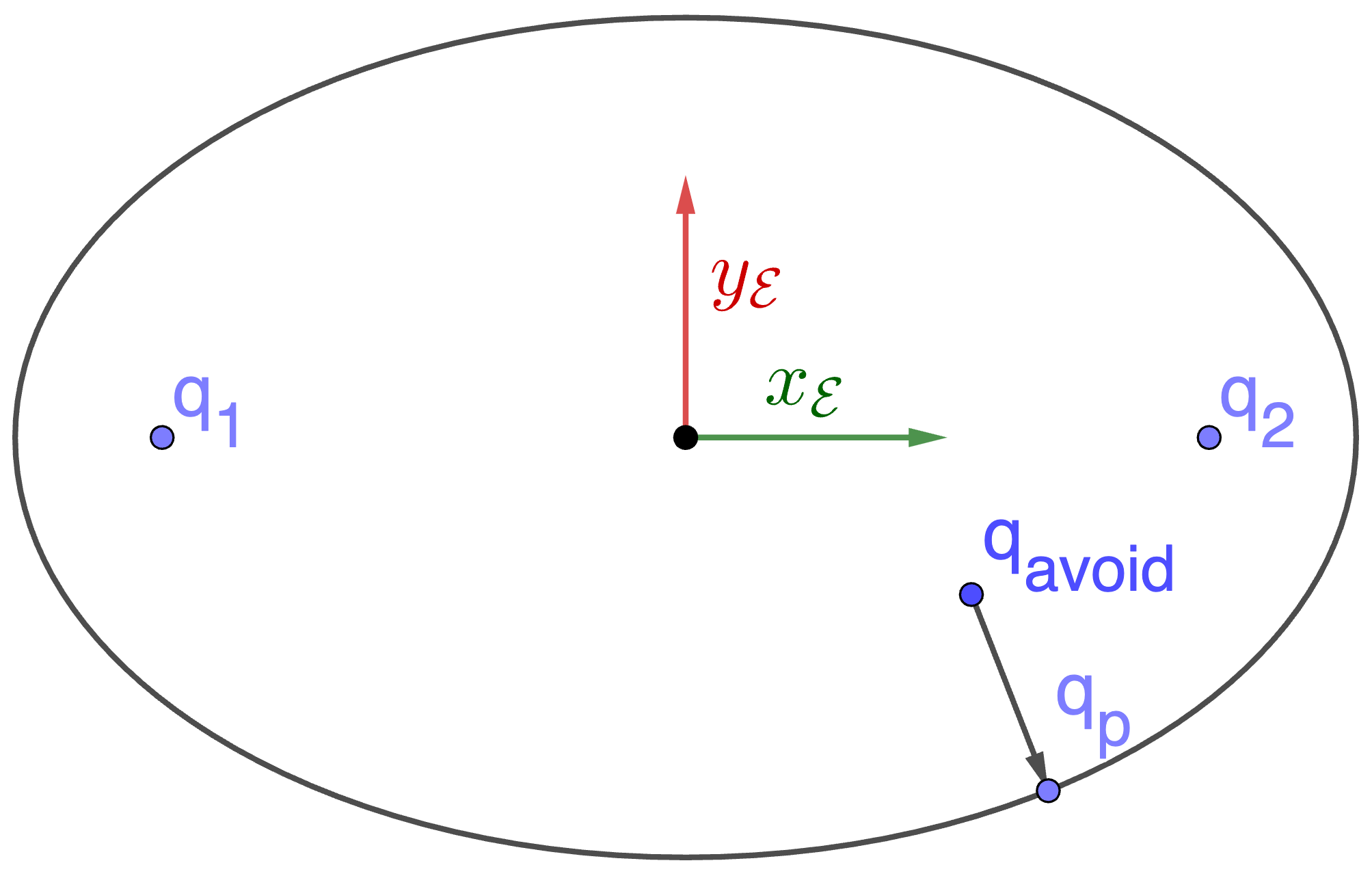}}

    \subfloat[Plane-ellipse constraint \label{fig:Ellipse-to-plane}]{ \includegraphics[width=0.47\linewidth]{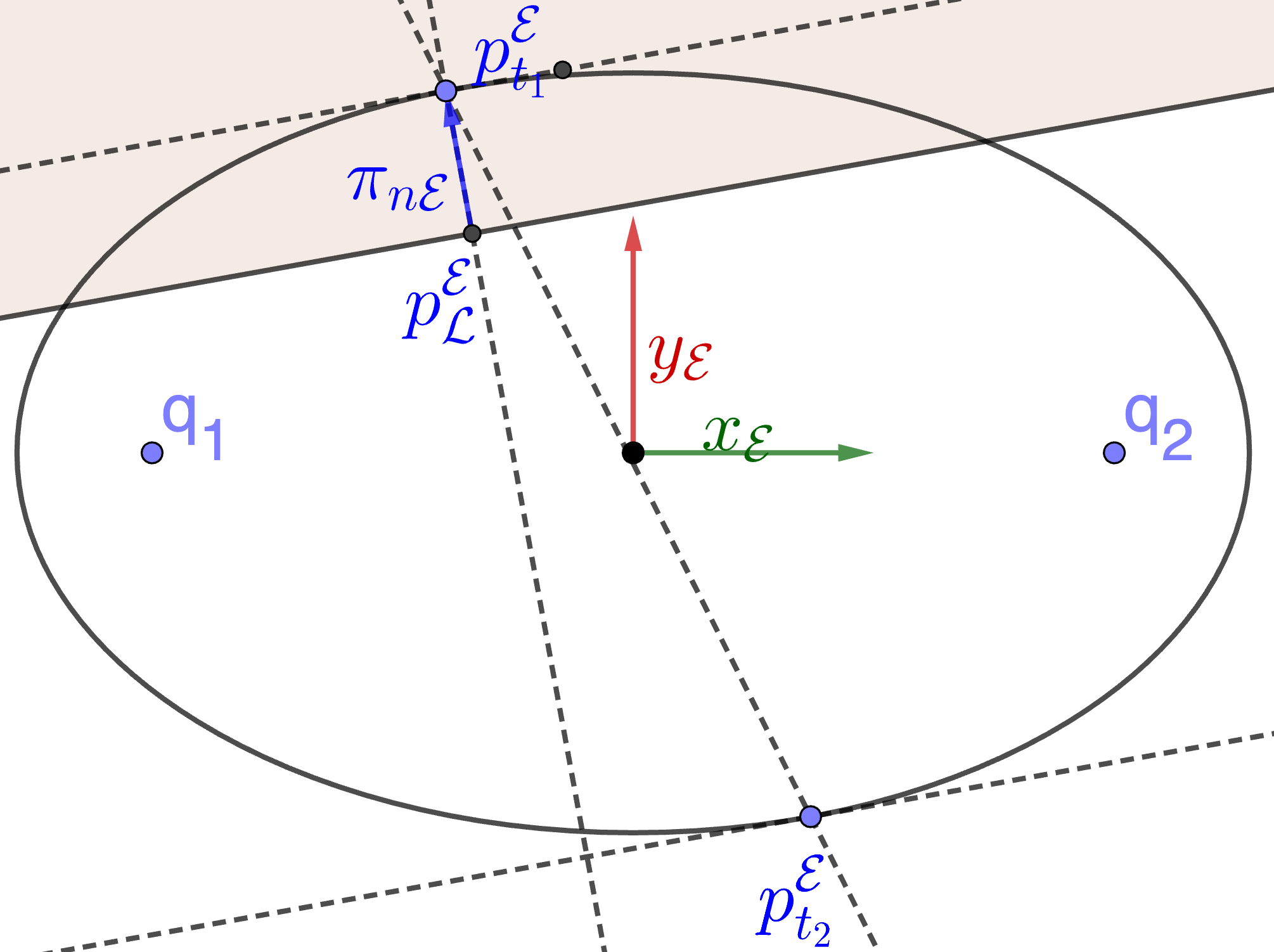}}
    \subfloat[Polygon-ellipse constraint \label{fig:Ellipse-to-safezone}]{\includegraphics[width=0.47\linewidth]{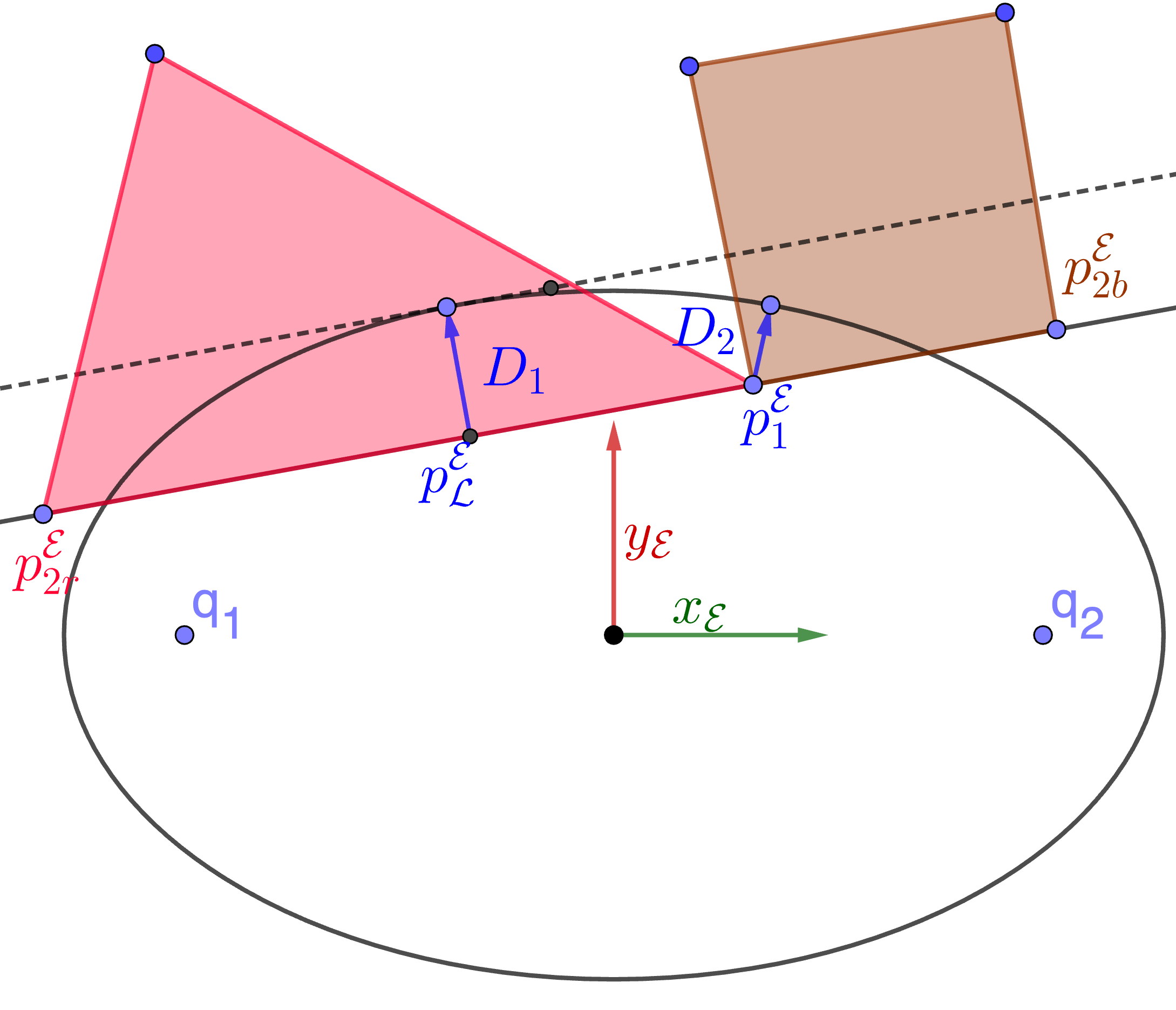}}

    \caption{ (\ref{fig:EllipseConstraintExample}) The black line is the planned trajectory, $q_1$ and $q_2$ are two co-observed locations where the robots are expected at given times $t_1$ and $t_2$, dashed lines show a possible trajectory of a compromised robot during the unobserved period. The axis of global frame $\cF$ and canonical frame $\cF_{\cE}$ are shown as $x_{\cF}, y_{\cF}$ and $x_{\cE}, y_{\cE}$ respectively. 
    (\ref{fig:Ellipse-to-point}) For point-ellipsoid constraint, $q_{avoid}$ is projected to the areas outside the ellipsoid to $q_{p}$. 
    (\ref{fig:Ellipse-to-plane}) For plane-ellipsoid constraint, the projection is simplified to the point-ellipse constraint that projects point $p_{L}$ outside the ellipse to $p_{t}$.
    (\ref{fig:Ellipse-to-safezone}) For convex-polygon-ellipsoid constraint, the projection is either a plane-ellipse constraint $D_{1}$ (for the red region) or a point-ellipse constraint $D_{2}$ (for the brown region).}
    \label{fig:Reachability_full}
  \end{figure}

  The condition that a reachability ellipsoid $\cE(q_1,q_2)$ does not intersect with any forbidden region is sufficient to secure the trajectory of the robot between $q_1$ and $q_2$ according to \cref{def:secured-plan}: any deviation to a point $p_o$ outside $\cE(q_1,q_2)$ will cause the robot to miss a potential observation at $p_2$, $t_2$. \cref{def:secured-plan} can then be restated as follows:
\begin{definition}\label{rmk:revised-security}
  A multi-robot trajectory is secured against plan-deviation attacks if there exists a co-observation plan such that the reachability region between each consecutive co-observation does not intersect with any forbidden regions.
\end{definition}

\subsection{Transformation to canonical coordinates}\label{sec:rotation2Standard}
\newcommand{\oFE}{o}
To use the definition of reachability ellipsoid from the section above as computational constraints for ADMM, we first apply a differentiable rigid body transformation to reposition the ellipse $\cE$ from the global frame $\cF$ to a canonical frame $\cF_\cE$, where the origin of $\cF_\cE$ is at the ellipsoid's center $\oFE = \frac{1}{2}(q_1+q_2)$, and the first axis of $\cF_\cE$ is aligned with the foci (see \cref{fig:EllipseConstraintExample} for an illustration). 
From this definition, a unit vector along the first axis of the ellipsoid in the frames $\cF$ and $\cF_\cE$ is given by $\nu_\cF = \frac{q_2-q_1}{\norm{q_2-q_1}}$ and $\nu_\cE =[1,0,0]^T$, respectively.
The full transformation from coordinates $q^\cE\in\cF_\cE$ to $q\in\cF$, and its inverse, are then parametrized by a rotation $R^\cF_\cE$ and a translation $o^\cF_\cE$ as (we drop the subscript and superscript from $R^\cF_\cE$ and $o^\cF_\cE$ to simplify the notation):
\begin{equation}\label{eq:transformations}
  q=R\Eframe{q}+\oFE,\quad
  \Eframe{q}=R\transpose(q-\oFE).
\end{equation}
where the rotation matrix $R=H(\nu_\cF(q_1,q_2),\nu_\cE)$ is a \emph{Householder rotation} $H$ (a differentiable linear transformation describing the minimal rotation between the two vectors, see \cref{sec:householder} for details).  To simplify the notation, in the following, we will consider $H$ to be a function of $q_1,q_2$ directly, i.e. $H(q_1,q_2)$. More details on \eqref{eq:transformations} are given in \cref{apx:transformation}.

  Reachability constraints are formulated with respect to different types of forbidden regions a point, a plane, a segment, and a convex polygon.

\subsection{Point-ellipsoid reachability constraint}\label{sec:ellipsoid-point}
 For a forbidden region in the shape of a single point $q_{avoid}$ (\cref{fig:Ellipse-to-point}), the constraint is written as:
\begin{constraint}[Point-ellipsoid reachability constraint]
\begin{align}
D(\vq) &=  \begin{cases}
      \pi_{p\cE}(\vq) - q_{avoid} & q_{avoid} \in \cE, \\
      {0} & \textrm{otherwise}.
    \end{cases} \label{eq:point_ellipsoid_constraint}\\
  \sZ &= \{\vq\in\mathbb{R}^{nm}:\norm{D_p(\vq)} = 0 \},\\
   \Pi_\sZ(\vz) & = 0, 
\end{align}
\end{constraint}

where $\pi_{p\cE}(q_{avoid};q_1,q_2,a) = q_p$ is a projection function that returns a projected point $q_p$ of $q_{avoid}$ outside the ellipse, i.e., as the solution to
\begin{equation}\label{eq:ellipse-point-projection}
\pi_{p\cE}=\argmin_{q_p\in\cE^c} \quad \norm{q_{avoid}-q_p}^2,
\end{equation}
where $\cE^{c}$ is the set complement of region $\cE$.

For cases where $q_{avoid} \notin \cE(q(t_2),q(t_1),r)$, $\pi_{p\cE}(q_{avoid}) = q_{avoid}$. And for cases where $q_{avoid} \in \cE(q_1,q_2,r)$, $D(q)$ needs to be projected to the boundary of the ellipse. 
In the canonical frame $\cF_{\cE}$, the projected point $q^{\cE}_{p}$ can be written as:
  \begin{equation}
  	q^\cE_{p} = {(I+sQ)}\inverse q^\cE_{avoid} = S q^\cE_{avoid},
  \end{equation}
where $s$ can be solved as the root of the level set \eqref{equ:standard-ellipse}:
  \begin{equation}
    {q^\cE_p}\transpose Q q^\cE_p-1={q^\cE_{avoid}}\transpose Q'(s) q^\cE_{avoid} -1=0,
  \end{equation}
  where
  \begin{equation}
    Q'(s) =S\transpose Q S = \diag\left(\frac{a^{2}}{(s+a^{2})^2}, \frac{b^{2}}{(s+b^{2})^2},\frac{b^{2}}{(s+b^{2})^2}\right).
  \end{equation}
 Detailed methods for computing $s$ can be found in \cite{yang2021multi,yang2020multi,eberly}.

The point-to-ellipse projection function in $\cF$ is then:
  \begin{multline}\label{equ:Point2EllipseProjection}
    \pi_{p\cE}(q)= R^{-1}(q(t_1),q(t_2))q^\cE_p+o \\
    = R^{-1}(q(t_1),q(t_2))Sq^\cE_{avoid}+o\\
    = R^{-1}S R ( q_{avoid}- o)+o.
  \end{multline}
  
In our derivations, we consider only the 3-D case ($m=3$); for the 2-D case, let $P=\bmat{I & 0}\in\real{2\times 3}$: then $\pi_{p\cE}^{\textrm{2D}}=P\pi_{p\cE}^{\textrm{3D}}(P\transpose q_{avoid}; P\transpose q_1, P\transpose q_2,a)$.

  \begin{proposition}\label{prop:Ellipse2PointDiff}
    The differential of the projection operator $\pi_{p\cE}(q_{avoid}; q_1,q_2,a)$ with respect to the foci $q_1,q_2$ is given by the following (using $q$ as a shorthand notation for $q^\cE_{avoid}$)
    \begin{multline}
      \partial_{\left[\begin{smallmatrix}q_1\\q_2\end{smallmatrix}\right]} \pi_{p\cE}=-2 H [ SH(q-o)]_{\times}U   \\
      + \left( (q\transpose \partial_sQ' q)^{-1} H^{-1} Q' q q\transpose  (4Q' H[q - o]_\times U \right. \\
      \left.+ 2Q' H\partial_q o - \partial_b Q' q q \partial_q b) -  s H^{-1} S^2 \partial_b Q q \partial_q b\right) \\
      -2H^{-1} S H[q-o]_{\times} U  + (H^{-1}SH -I)\partial_q o.
    \end{multline}
  \end{proposition}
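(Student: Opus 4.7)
The plan is to differentiate the closed-form projection
\begin{equation*}
\pi_{p\cE}(q_{avoid}) = H^{-1} S H\, (q_{avoid} - o) + o
\end{equation*}
from \eqref{equ:Point2EllipseProjection} via the product rule. The $(q_1,q_2)$-dependence enters through four quantities: the Householder rotation $H$ (and its inverse) through $\nu_\cF(q_1,q_2)$; the translation $o = \tfrac{1}{2}(q_1+q_2)$; the scalar $s$ defined implicitly as the root of the level-set equation from \Cref{sec:ellipsoid-point}; and the matrix $Q = \diag(a^{-2},b^{-2},b^{-2})$, which depends on $(q_1,q_2)$ only through $b = \sqrt{a^2-c^2}$ with $c = \tfrac{1}{2}\|q_2-q_1\|$. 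Expanding the Leibniz rule gives five groups of terms, each of which I would match to one block of the claimed formula.

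For the translation, $\partial_q o = \tfrac{1}{2}\bmat{I & I}$, and the two appearances of $o$ combine into $(H^{-1}SH - I)\,\partial_q o$, matching the last block. For the two rotation factors, I would invoke \Cref{prop:Hderivitive}: for any fixed vector $w$, $\delta(Hw) = H\cross{-2M\,\delta\nu_\cF}w$, which via the identity $\cross{a}w = -\cross{w}a$ can be rewritten as an operator of the form $H\cross{w}(\cdot)$ acting on $\delta\nu_\cF$. Composing with the chain rule $\delta\nu_\cF = U\,\bmat{\delta q_1 \\ \delta q_2}$, where $U = \partial_{[q_1;q_2]}\nu_\cF = \tfrac{1}{\|q_2-q_1\|}(I-\nu_\cF\nu_\cF\transpose)\bmat{-I & I}$, produces the two outer Householder contributions $-2H\cross{SH(q-o)}U$ (from differentiating the outer $H^{-1}$) and $-2H^{-1}SH\cross{q-o}U$ (from differentiating the inner $H$), with signs and the matrix factor $M$ tracked through the simplification.

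The most involved block is the middle one stemming from $\delta S$, and I expect this to be the main obstacle. Using $\delta S = -S(\delta s\cdot Q + s\,\delta Q)S$, the direct $\delta Q$ piece reduces to $\partial_b Q\cdot \partial_q b$ since only the last two diagonal entries of $Q$ depend on $(q_1,q_2)$ through $b$; sandwiching by $-sS$ and conjugating by $H$ produces the $-s H^{-1}S^2\partial_b Q\, q\, \partial_q b$ contribution. The $\delta s$ piece requires implicit differentiation of the level-set identity ${q^\cE_{avoid}}\transpose Q'(s)\,q^\cE_{avoid} = 1$ with $Q'(s) = S\transpose Q S$; differentiating and solving for $\delta s$ yields $\delta s = -(q\transpose\partial_s Q'\,q)^{-1}\,q\transpose (\delta Q'|_{s\text{ fixed}})\,q$, where at fixed $s$ the remaining dependence comes from $Q$ through $b$ (giving $-\partial_b Q'\,q\, q\, \partial_q b$) and from $q^\cE_{avoid} = H(q_{avoid}-o)$ through $H$ and $o$ (giving $4Q'H\cross{q-o}U$ and $2Q'H\,\partial_q o$, respectively, via the same cross-product manipulation as above). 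Premultiplying by $H^{-1}Q'\,q q\transpose$ reproduces the middle block verbatim. Aside from this bookkeeping the algebra is routine; the principal difficulty is tracking sign conventions from $\cross{a}b = -\cross{b}a$, correctly grouping the many resulting terms, and carrying the implicit scalar $s$ through consistently.
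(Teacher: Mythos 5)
Your proposal follows essentially the same route as the paper's proof: apply the product rule to the closed form $\pi_{p\cE}=H^{-1}SH(q_{avoid}-o)+o$, obtain $\dot H$ from Proposition~\ref{prop:Hderivitive} with the chain rule through $\nu_\cF$, handle $\dot S$ via $\dot S=-S^2(Q\dot s+s\dot Q)$ together with implicit differentiation of the level-set identity to extract $\dot s$, and track the $b$- and $o$-dependence through $\partial_q b$ and $\partial_q o$. The decomposition into term groups and the matching of each group to a block of the claimed formula coincide with the paper's derivation, so the attempt is correct and not materially different.
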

  \begin{proof}
  See \cref{proof:Ellipse2PointDiff}.
  \end{proof}
  The differential of $D_p$ is the same as the one for $\pi_{p\cE}$.

\subsection{Plane-ellipsoid reachability constraint}\label{sec:ellipsoide-plane}

For a hyperplane shaped forbidden region $\mathcal{L}(q) = \{q\in\mathbb{R}^m: \vn\transpose q = d\}$ (\cref{fig:Ellipse-to-plane}), the reachability constraint is $\mathcal{L} \cap \mathcal{E}(q_1,q_2,a) = \emptyset$. When transformed into the canonical frame, the hyperplane can be written as $\Eframe \cL(\Eframe{q}) = \{  \Eframe{q}\in\mathbb{R}^m:  \vn_\cE\transpose \Eframe{q} = d_{\cE}\}$, with $\vn_\cE = H(q_1,q_2)\vn, \quad d_\cE =-\vn\transpose o + d$.

For every $\Eframe \cL(\Eframe{q})$, there exist two planes that are both parallel to $\cL$ and tangential to the ellipse (i.e. resulting in a unique intersection point \cref{fig:Ellipse-to-plane}), $\cL_1^{\cE} = \{\Eframe{q}\in\mathbb{R}^m: \vn_\cE \transpose \Eframe{q} =  d_{\cE t} \}$ and $\cL_2^{\cE} = \{\Eframe{q}\in\mathbb{R}^m: \vn_\cE \transpose \Eframe{q} = - d_{\cE t} \}$. The intersection point can be written as:
\begin{equation}
    p^\cE_{t_{1}} = \frac{ d_{\cE t} Q^{-1}   n_\cE}{ n_\cE \transpose Q^{-1}  n_\cE} = \frac{Q^{-1} n_\cE}{ d_{\cE t}},\quad  p^\cE_{t_{2}} = -  p^\cE_{t_{1}},
\end{equation}
where $d_{\cE t} = \sqrt{ n_\cE\transpose Q^{-1} n_\cE}$; intuitively, $d_{\cE t}$ can be treated as a distance between the tangent plane $\cL_1^\cE$ (or $\cL_2^\cE$) and the origin (i.e., the center of the ellipse $\cE$). The concept of \emph{tangent interpolation point}s are introduced to characterize the relationship between the plane and the ellipsoid.

\begin{definition}
  The \emph{tangent interpolation point} $p^\cE_{\cL} \in \mathcal{L}$ between the plane $\cL$ and the ellipsoid $\cE$ is defined on the plane by 
    \begin{equation}\label{equ:ProjectPoint}
      p^\cE_{\cL} = \frac{ d_\cE Q^{-1} n_\cE}{ n_\cE\transpose Q^{-1} n_\cE}.
    \end{equation}
  Intuitively, the point $p^\cE_\cL$ is the closest point on $\cL$ to either $p^\cE_1$ or $p^\cE_2$.
  Note that when  $d_\cE=d_{\cE t}$ or $d_\cE=-d_{\cE t}$, $p^\cE_{\cL}=p^\cE_{t_1}$ or $p^\cE_{\cL}=p^\cE_{t_2}$, respectively. When $ d_\cE \in [- d_{\cE t},  d_{\cE t}]$, the plane $\cL$ and the ellipsoid $\cE$ have at least one intersection, thus violating our desired reachability constraint. 
\end{definition}

With these definitions, the constraint can be written as:
\begin{constraint}[Plane-ellipsoid reachability constraint]
\begin{align}
D(\vq) &= H^{-1}(q_{1},q_2)\pi^\cE_{\vn_\cE}(\vq)+o, \label{eq:plane_ellipsoid_constraint}\\
  \sZ &= \{\vq\in\mathbb{R}^{nm}:\norm{D_{\vn}(\vq)} = 0 \},\\
   \Pi_\sZ(\vz) & = \overrightarrow{0}, 
\end{align}
\end{constraint}
where $\pi^\cE_{\vn_\cE}(q)$ is the projection operator defined as:
  \begin{equation}\label{equ:plane2ellipse}
    \pi^\cE_{\vn_\cE}(\vq) =\begin{cases}
      p^\cE_{t_1}-p^\cE_{\cL} & \textrm{ if } d_\cE \in [0, d_{\cE t}], \\
      p^\cE_{t_2}-p^\cE_{\cL} &  \textrm{ if } d_\cE \in [- d_{\cE t},0), \\
      {0} & \textrm{otherwise}.
    \end{cases}
  \end{equation}

  \begin{proposition}\label{prop:dpi_ne_dt}
    The differential of the projection function $\pi^\cE_{\vn \cE}(\vq)$ with respect to the foci $q_1$ and $q_2$ is given by:
    \begin{equation}
      \partial_q \pi^\cE_{\vn \cE}(q) = \begin{cases}
        \partial_q p^\cE_{t1}-\partial_q p^\cE_{\cL} &  d_\cE \in [0, d_{\cE t}], \\
        \partial_q p^\cE_{t2}-\partial_q p^\cE_{\cL} &  d \in [- d_{\cE t},0), \\
        {0} & otherwise.
      \end{cases}
    \end{equation}
    where
    \begin{multline}
      \partial_q p_\cL =   (-\frac{d_{\cE t} n\transpose \partial_q o -2d_\cE \partial_q d_{\cE t}}{d_{\cE t}^3} )Q^{-1}n_\cE \\
      + \frac{d_\cE\partial_b Q^{-1} n_\cE \partial_q b -  2d_\cE Q^{-1}H[n]_\times U}{d_{\cE t}^2},
    \end{multline}
    \begin{equation}
      \partial_q p_{1} =  -\frac{Q^{-1}n_\cE \partial_q d_{\cE t} }{d_{\cE t}^2} 
      + \frac{\partial_b Q^{-1}n_\cE \partial_q b -  2Q^{-1}H[n]_\times U }{d_{\cE t}}.
    \end{equation}
  \end{proposition}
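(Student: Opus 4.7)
The plan is to reduce the piecewise projection to elementary differentials of its constituent quantities and then assemble via the quotient and product rules. Since $\pi^\cE_{\vn_\cE}(q)$ is a piecewise linear combination of the two tangent points $p^\cE_{t_1}$, $p^\cE_{t_2}=-p^\cE_{t_1}$, and the tangent interpolation point $p^\cE_{\cL}$, and the zero branch is trivial, only $\partial_q p^\cE_{t_1}$ and $\partial_q p^\cE_{\cL}$ need to be computed. The branch selection depends on the sign of $d_\cE$ and its comparison to $\pm d_{\cE t}$, both of which are continuous in $q_1,q_2$; hence the case distinction is locally constant in the interior of each region, and the pointwise differential is obtained by differentiating the relevant closed-form expression in that region.

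First I would collect the elementary differentials. From $o=\tfrac12(q_1+q_2)$ I get $\partial_q o = \tfrac12[I\ I]$, and consequently $\partial_q d_\cE = -n\transpose \partial_q o$ using $d_\cE=-n\transpose o + d$. From $b=\sqrt{a^2-c^2}$ with $c=\tfrac12\|q_2-q_1\|$ (and $a$ fixed by the timing) I obtain $\partial_q b$ by the chain rule, which drives $\partial_q Q^{-1} = \partial_b Q^{-1}\,\partial_q b$ through the diagonal form of $Q^{-1}=\diag(a^2,b^2,b^2)$. The differential of $n_\cE = H(q_1,q_2)\,n$ comes from \Cref{prop:Hderivitive} applied to the curve $\nu_\cF=(q_2-q_1)/\|q_2-q_1\|$: writing $\dot n_\cE = H[-2M\dot\nu_\cF]_\times n$ and invoking the identity $[a]_\times b = -[b]_\times a$ recasts this as $\partial_q n_\cE = -2H[n]_\times U$, where $U$ bundles the factor $M\,\partial_q \nu_\cF$. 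Finally, differentiating the scalar identity $d_{\cE t}^2 = n_\cE\transpose Q^{-1} n_\cE$ and solving for $\partial_q d_{\cE t}$ gives a closed form in terms of the three pieces above.

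Next I would apply the quotient rule to $p^\cE_{t_1} = Q^{-1} n_\cE/d_{\cE t}$, substituting $\partial_q (Q^{-1} n_\cE) = \partial_b Q^{-1} n_\cE\,\partial_q b - 2 Q^{-1} H[n]_\times U$; the claimed expression for $\partial_q p_1$ then follows almost by inspection. For $p^\cE_{\cL}=d_\cE Q^{-1} n_\cE/d_{\cE t}^2$ I would combine the product rule on the numerator $d_\cE\cdot Q^{-1} n_\cE$ with the quotient rule on the $d_{\cE t}^2$ denominator, using $\partial_q d_{\cE t}^2 = 2 d_{\cE t}\,\partial_q d_{\cE t}$. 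Grouping the resulting four contributions into the terms proportional to $Q^{-1} n_\cE$ (the first parenthesized block in the statement) and the remaining $\partial_b Q^{-1}$ and $H[n]_\times U$ contributions (the second block) recovers the displayed formula for $\partial_q p_{\cL}$. Combining with $\partial_q p^\cE_{t_1}$ (and $\partial_q p^\cE_{t_2}=-\partial_q p^\cE_{t_1}$) on the corresponding branches yields the stated piecewise expression.

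The hard part will be the Householder-rotation step: unpacking $\dot H = H[-2M\dot\nu_\cF]_\times$ from \Cref{prop:Hderivitive} into the concrete matrix form $-2H[n]_\times U$ that appears in the final answer hinges on the cross-product identity and on careful sign/ordering bookkeeping, particularly because $n$ is constant in $\cF$ while $H$ depends on both $q_1$ and $q_2$ through the normalized chord $\nu_\cF$. Once $\partial_q n_\cE$, $\partial_q d_\cE$, $\partial_q b$, $\partial_q Q^{-1}$, and $\partial_q d_{\cE t}$ are in hand, the rest is routine quotient-rule algebra, and no boundary analysis of the piecewise definition is required since the claim is a differential valid on each open branch.
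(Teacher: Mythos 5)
Your proposal matches the paper's own proof essentially step for step: the paper likewise computes $\dot d_\cE$ and $\dot d_{\cE t}$ first (the latter by differentiating $\sqrt{n_\cE\transpose Q^{-1} n_\cE}$ with $\dot n_\cE$ obtained from \Cref{prop:Hderivitive}), then applies the quotient/product rules to $p^\cE_{t_1}=Q^{-1}n_\cE/d_{\cE t}$ and $p^\cE_{\cL}=d_\cE Q^{-1}n_\cE/d_{\cE t}^2$ on each branch. The approach and decomposition are the same, so no further comparison is needed.
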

  \begin{proof}
  See \cref{proof:dpi_ne_dt}.
  \end{proof}
 The differential of (\ref{eq:plane_ellipsoid_constraint}) can be written as:
  \begin{equation}
    \partial_q {D_{\vn\cE}} = -2 H \cross{\Eframe \Pi_{\vn \cE}}  M + H^{-1}\partial_q \Eframe \Pi_{\vn \cE}.
  \end{equation}
\subsection{Convex-polygon-ellipse reachability constraint}\label{sec:ellipse-region-constraint}  
The reachability constraint for a convex polygon is treated as a union of reachability constraints for each individual segments that define the hyperplane. We define
\begin{constraint}[Convex-polygon-ellipsoid reachability constraint]\label{constraint:polygon-ellipsoid}
\begin{align}
D(\vq) &= \bmat{D_{seg1}(\vq)\\D_{seg2}(\vq)\\ \vdots} \label{eq:region_ellipsoid_constraint}\\
  \sZ &= \{\vq\in\mathbb{R}^{nm}:\norm{D(\vq)} = 0 \},\\
   \Pi_\sZ(\vz) & = 0, 
\end{align}
\end{constraint}
where $D_{seg}$ are the constraint functions for all line segments that define the convex polygon region. 
For hyperplane $\Eframe \cL(\Eframe q) = \{ \Eframe q\in\mathbb{R}^m:  \vn_\cE\transpose \Eframe q = d_{\cE}\}$ with endpoints $\Eframe {p_1}$ and $\Eframe{p_2}$; this segment is defined as:
\begin{equation}
\bmat{(\Eframe {p_1}- \Eframe{p_2})\transpose\\( \Eframe{p_2} - \Eframe{p_1})\transpose}  \Eframe p \leq \bmat{ \Eframe{p_2}\transpose\\- \Eframe{p_1}\transpose}( \Eframe{p_1}- \Eframe{p_2}), \quad \vn_\cE\transpose \Eframe q = d_{\cE}.
\end{equation}
When the \emph{tangent interpolation point} $\Eframe{p_{\cL}}$ stays within the segment (i.e. red region in \cref{fig:Ellipse-to-safezone}), the constraint is a plane-ellipse constraint in \cref{sec:ellipsoide-plane}. Otherwise (i.e. brown region in \cref{fig:Ellipse-to-safezone}), the constraint is a point-ellipse constraint in \cref{sec:ellipsoid-point}.

\begin{constraint}[Line-segment-ellipsoid reachability constraint]
\begin{align}
D_{seg}(\vq) &=  \begin{cases}
D_{p_{1}}(\vq) & (\Eframe{p_1}-\Eframe{p_2})\transpose (\Eframe{p_{\cL}}-\Eframe{p_2})<0\\
D_{p_{2}}(\vq) & (\Eframe{p_2}-\Eframe{p_1})\transpose (\Eframe{p_{\cL}} -\Eframe{p_1})<0\\
D_\Eframe{p_{\cL}}(\vq) & \textrm{otherwise}
\end{cases} \label{eq:segment_ellipsoid_constraint}\\
  \sZ &= \{\vq\in\mathbb{R}^{nm}:\norm{D_{seg}(\vq)} = 0 \},\\
   \Pi_\sZ(\vz) & = 0, 
\end{align}
\end{constraint}
where $D_{p_{1}}$ and $D_{p_{2}}$ are the point-ellipsoid constraint projection function~\eqref{eq:point_ellipsoid_constraint} with respect to $\Eframe{p_{1}}$ and $\Eframe{p_{2}}$, and $D_\Eframe{p_{\cL}}$ is the plane-ellipsoid constraint~\eqref{eq:plane_ellipsoid_constraint} with respect to frame $\Eframe {\cL}$.
This constraint needs to be supplemented with a convex obstacle constraint for the polygon (i.e. keep foci waypoints outside the region, introduced in \cite{yang2020multi}) to prevent cases where the ellipse is a subset of the region.

\subsection{Secured planning results and limitation}\label{sec:ADMM-simulation}

We employ the Attack-Proof MAPF (APMAPF) solver \cite{wardega2019resilience} on an 8 by 8 grid world with a similar setup to generate a MAPF plan with a co-observation schedule in a grid-world application (\cref{fig:Grid-example-application}). This result is transformed to a continuous configuration space and serves as the initial trajectory input for the ADMM solver with an additional task function targeting the map exploration task. Co-observation schedules are set up using the APMAPF algorithm for two forbidden regions. Reachability constraints are added to ensure an empty intersection between all robots' reachability regions during co-observations and the forbidden regions. It is important to emphasize that an APMAPF solution does not guarantee existence, nor does it ensure a successful transition to the continuous configuration space. In this case, while an APMAPF solution may be found, the secured, attack-proof solution becomes infeasible in the continuous setting because the robots' mobility is no longer restricted to adjacent grids. Additional security measures, such as stationary security cameras or surveillance robots, need to be incorporated. For instance, in this scenario, we deploy a security camera as an additional security measure to observe agent 3 at time $8$ to ensure security.

The simulation result, shown in Fig.~\ref{fig:ReachabilitySimulation}, displays reachability regions as black ellipsoids, demonstrating empty intersections with region 2 and 3. Explicit constraints between reachability regions and obstacles are not activated, assuming basic obstacle avoidance capabilities in robots. The intersections between obstacles and ellipsoids, as observed between agent 3 and region 1, are deemed tolerable. All constraints are satisfied, and agents have effectively spread across the map for optimal exploration tasks.

\begin{figure}
  \centering
  \subfloat[Grid world result\label{fig:Grid-example-application}]{\includegraphics[height = 0.47\linewidth,trim =0cm 0cm 0cm 0cm, clip,valign=t]{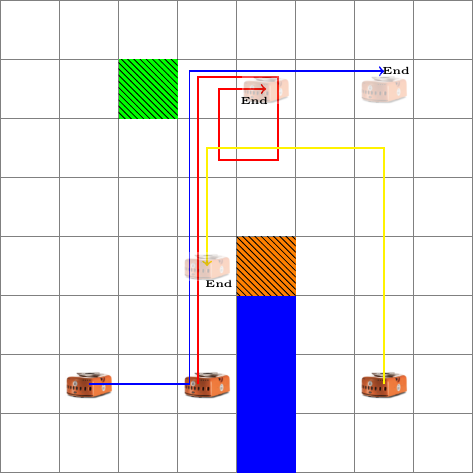}}
  \subfloat[Secured result \label{fig:ReachabilitySimulation}]{\includegraphics[height=0.47\linewidth,valign=t]{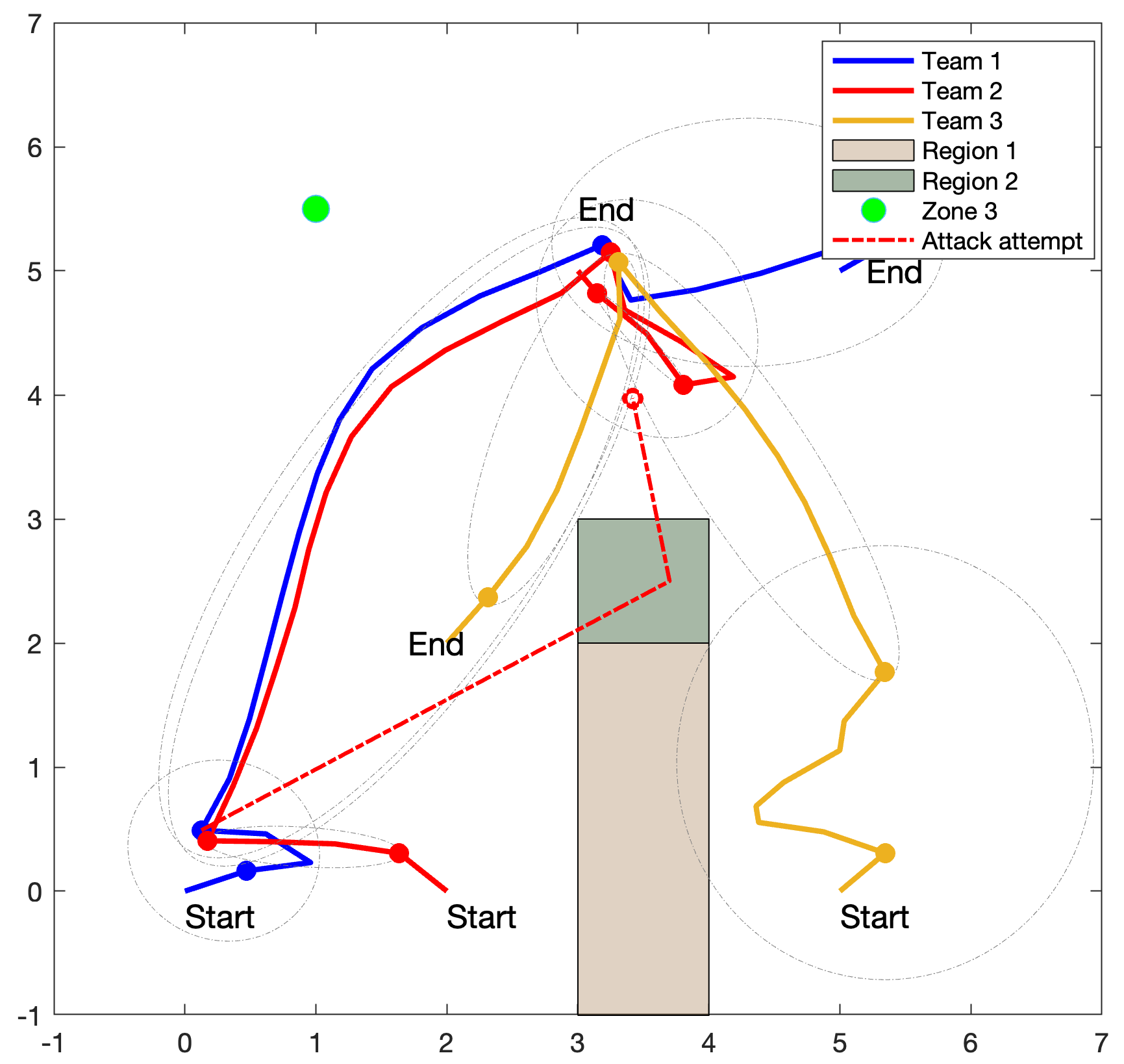}}
  \caption{ Trajectory design of a three-robot system based on the grid world result. Region 1 is an obstacle, region 2 and region 3 are forbidden regions. (\ref{fig:Grid-example-application}) Result of the APMAPF algorithm in $8\times8$ grid-world without the map exploring objective. (\ref{fig:ReachabilitySimulation}) The secured trajectory with the incorporation of co-observation generated by APMAPF and additional reachability constraints. Attack attempts into the forbidden region (red dashed line) will cause team 1 miss the next scheduled co-observation.}
  \label{fig:example-application}
\end{figure}

\subsubsection{Limitations}\label{sec:reachability-discussion}
Our solution demonstrates the potential of planning with reachability and co-observation to enhance the security of MRS. However, two primary challenges need to be addressed. Firstly, achieving a co-observation and reachability-secured plan is not always feasible, particularly when obstacles or restricted regions separate the robots, preventing them from establishing co-observation schedules or finding reachability-secured paths. For example, agent 3 in~\cref{fig:ReachabilitySimulation} requires additional security measures to create secured reachability areas. Secondly, security requirements can impact overall system performance, as illustrated by the comparison between~\cref{fig:SecurityBreak} and \cref{fig:ReachabilitySimulation}. The introduction of security constraints resulted in the top left corner remaining unexplored by any robots. This trade-off between security and system performance is particularly significant as system performance is a key factor in the decision of MRS deployments. These challenges are further addressed in \cref{sec:cross-trajectory}, ensuring the effective integration of reachability and co-observation in securing MRS.

\section{Cross-trajectory co-observation planning}\label{sec:cross-trajectory}
To address the feasibility and performance trade-offs, we propose to form \emph{sub-teams} on each route, and setup additional co-observations both within the sub-team and across different sub-teams. These \emph{cross-trajectory co-observations} allow sub-teams to obtain trajectories that are closer to the optimal (as shown in \cref{fig:cross-traj-comparison-set}), because they do not require the entire \emph{sub-team} to meet with other teams for co-observations, thus providing more flexibility.
\subsection{Problem overview}
We start with an unsecured MRS trajectory with $n_p$ routes $\{q_p\}_{p=1}^{N_p}$ (the ADMM-based planner in \cref{sec:ADMM-planning} without security constraints is used, but other planners are applicable). Here, the state $q_p(t), p \in\{\cI_0,\ldots,\cI_{p}\}$ represents the reference position of the $p$-th sub-team at time $t\in\{0, \dots, T\}$. Introducing redundant robots into the MRS, we assume a total of $n > n_p$ robots are available and organized into \emph{sub-teams} through a time-varying partition $\cI(t)=\union_p \cI_p(t)$ where robots in each \emph{sub-team} $\cI_p$ share the same nominal trajectory. To ensure the fulfillment of essential tasks, at least one robot is assigned to adhere to the reference trajectory. The remaining redundant $n_p-n$ robots are strategically utilized to enhance security. These robots focus on co-observations either within their respective sub-teams, or when necessary, deviate and join another sub-team to provide necessary co-observations. The objective of this problem is to formulate a strategy for this new co-observation strategy, named \emph{cross-trajectory co-observation}, such that the resulting MRS plan meets the security in \cref{rmk:revised-security} while minimizing the number of additional robots required. 

Our strategy is based on the following concept:
\begin{definition}
The $i$-th \emph{checkpoint} $v_{pi}=(q_{p},t_{i})$ for sub-team $p$ is defined as a location $q_{pi}=q_{p}(t_i)$ and time $t_{i}$. It represents either the first or last instance where a robot is co-observed by a teammate, and serves as a point where the robot can leave or join a new team.
\end{definition}
For simplicity, let $\cI_{v_{i}}$ denote the sub-team to which $v_{i}$ belongs. 
To ensure the security of the reference trajectory, the reachability region between consecutive checkpoints must avoid intersections with forbidden regions. This requirement can be formally stated as:
\begin{remark}\label{rmk:checkpoints}
  A set of checkpoints $V_{p}=\{ v_{p0}, \dots ,v_{pT}\}$ (arranged in ascending order of $t_{v_{pi}}$) can secure the reference trajectory for sub-team $p$, if $\mathcal{E}(q_{v_{pi}}, q_{v_{p(i+1)}}, t_{v_{pi}},t_{v_{p(i+1)}}) \intersect F = \emptyset$ for every $i$, where $F$ is the union of all forbidden regions.
  \end{remark}

\begin{definition}
  The planned trajectory $\{q_p\}_{p=1}^{N_p}$ is represented as a directed \emph{checkpoint graph} $G=(V, E)$, where $V = (\union_p V_p) \union V_c$ is the set of vertices representing waypoints on the planned trajectory and $E = E_{t} \union E_{c}$ is the set of edges representing feasible paths connecting $V$. The components of the checkpoint graph are defined as follows:
  \begin{description}
    \item[Checkpoints $V_p$] where additional robots are required for security through co-observation.
    \item[Trajectory edges $E_t$] represent the original planned path of the primary robot.
    \item[Connection vertices $V_c$] represent non-checkpoint locations linked via $E_c$.
    \item[Cross-trajectory edges $E_c$] represent feasible paths where robots deviate from the original trajectory to join other teams or provide co-observation support, at least one endpoint be a checkpoint. 
  \end{description}
\end{definition}

We assume that the planned trajectory for each sub-team has a robot operating on it. The co-observation planning problem is then transformed into a network multi-flow problem by modeling the additional robots as flows traveling through $G$. These flows either follow the planned trajectory via trajectory edges $E_t$ or switch teams via cross-trajectory edges $E_c$ to ensure valid and secured paths. Additionally, they are required to visit every checkpoint $V_p$ to meet the security requirements introduced in~\ref{rmk:checkpoints}. This formulation reduces the co-observation problem to finding valid flow patterns in $G$. The problem is then formulated into linear objectives and constraints, enabling the use of Mixed-Integer Linear Programming (MILP) techniques to efficiently compute feasible solutions. 

This section presents the two components of our approach: constructing the checkpoint graph based on unsecured multi-robot trajectories and the formulation and solution of the network multi-flow problem.

\subsection{Rapidly-exploring Random Trees}
To find edges between different nominal trajectories, we use the \rrtstar{}~\cite{karaman2010incremental} algorithm to find paths from a waypoint on one trajectory to multiple destination points (i.e., reference trajectories of other sub-teams, in our method). As an optimal path planning algorithm, \rrtstar{} returns the shortest paths between an initial location and points in the free configuration space, organized as a tree. We assume that the generated paths can be traveled in both directions (this is used later in our analysis). 
Key function from \rrtstar{} is used during the construction of the checkpoint graph, specifically
\begin{description}
\item[$\texttt{Cost}(v)$] This function returns a total travel distance to the unique path from the initial position to $v$. 
\end{description}

Our objective here is to ascertain the existence of feasible paths instead of optimizing specific tasks. While the ADMM-based solver \cref{sec:ADMM-planning} presented earlier offers a broader range of constraint handling, RRT*'s efficient exploration of the solution space, coupled with its ability to incorporate obstacle constraints, makes it a fitting choice for building the checkpoint graph.

\subsection{Checkpoint graph construction}\label{sec:security-checkpoint}

In this section, we define and search for security checkpoints and how to use \rrtstar{} to construct the checkpoint graph $G_{q}$. We use a heuristic approach (\cref{alg:graph}) to locate the checkpoints on given trajectories. \cref{alg:graph} is applied independently to each sub-team to generate their initial checkpoints.” While an optimal solution would likely be NP-hard, this approach works well enough for our purpose.

\begin{algorithm}
	\caption{Checkpoint Graph Initialization for Team $\cI_p$}\label{alg:graph}
  \begin{algorithmic}[1]
	 \State \textbf{Initialization:} Set $t_{0} \leftarrow 0$ and $t_{3} \leftarrow T$, and add $(q_{p}(t_0),t_0)$ and $(q_{p}(t_3),t_3)$ to $V_{p}$ as \emph{start} and \emph{end vertices}.
	\While{$\mathcal{E}(q_{p}(t_{0}),q_{p}(t_3),t_{0},t_3) \cap F \neq \emptyset$ \textbf{and} $t_{0} \leq t_3$}
		\State \textbf{Forward Search:} Find the largest $t_{1} \in \{t_{0}+1, \dots, t_3\}$ such that $\mathcal{E}(q_{p}(t_{0}),q_{p}(t_{1}), t_{0},t_{1}) \cap F = \emptyset$. Once found, add $(q_{p}(t_{1}),t_{1})$ to $V_{p}$.
		\State \textbf{Backward Search:} Find the smallest $t_2\in\{t_{1}, \dots, t_3-1\}$ such that $
		\mathcal{E}(q_{p}(t_3),q_{p}(t_2), t_3,t_2) \cap F = \emptyset.$ Once found, add $(q_{p}(t_2),t_2)$ to $V_{p}$.
		\State \textbf{Update Indices:} Set $t_{0} \leftarrow t_{1}$ and $t_3 \leftarrow t_2$.
	\EndWhile
	\end{algorithmic}
	\end{algorithm}
\begin{figure}[h]
	\centering
    \subfloat{\includegraphics[width=0.47\linewidth,valign=c]{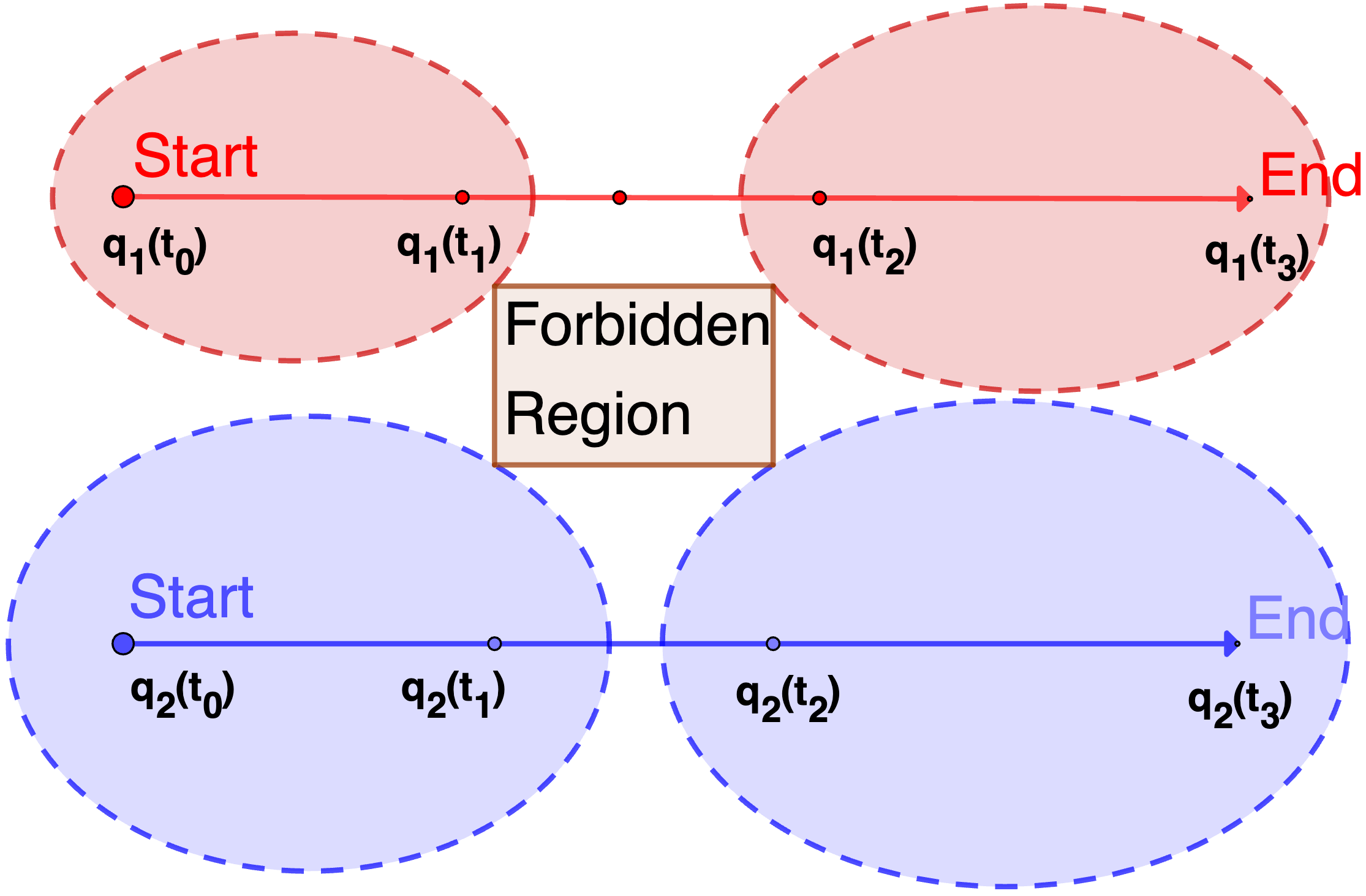}}
    \subfloat{\includegraphics[width=0.47\linewidth,valign=c]{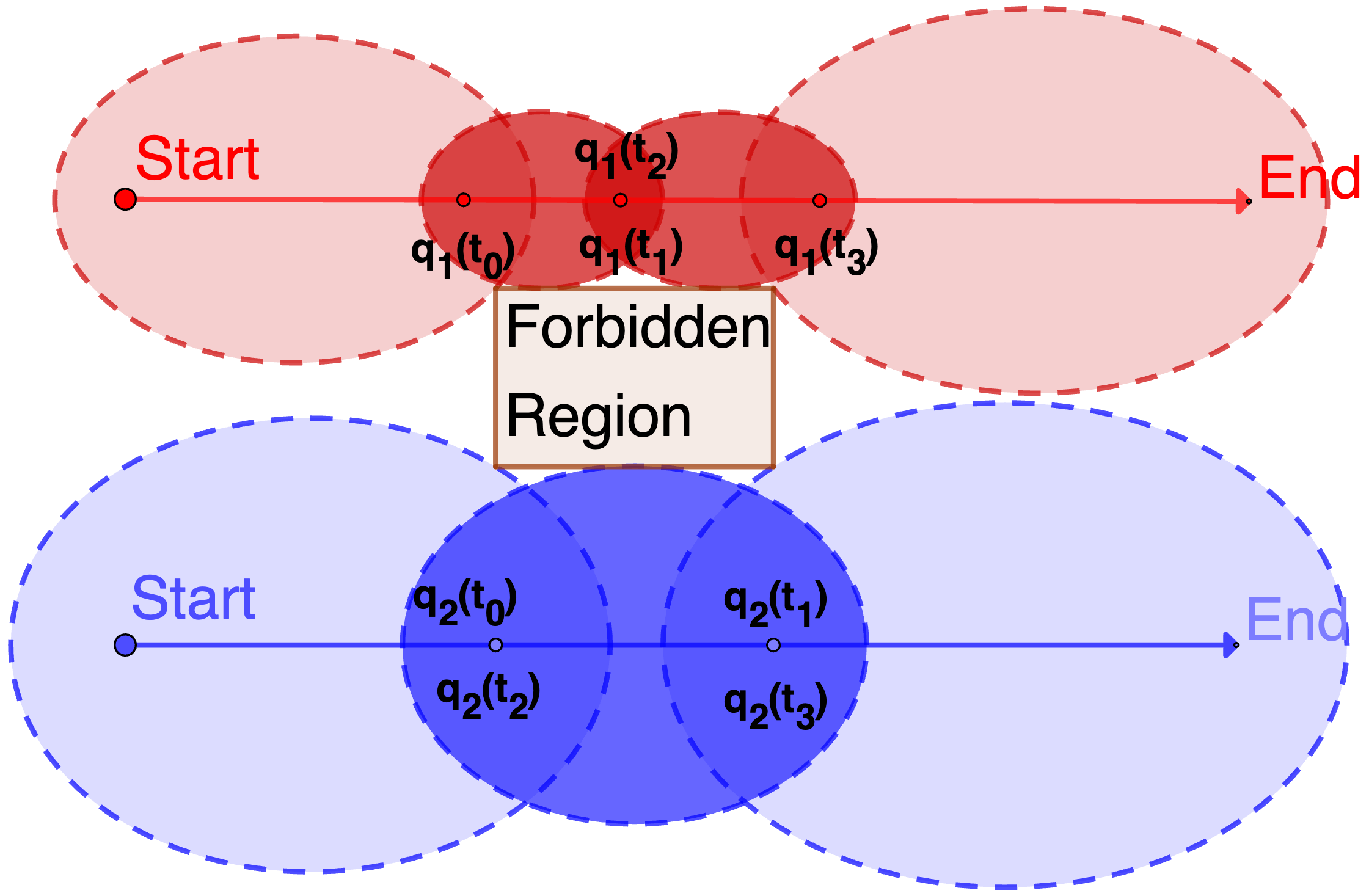}}
    \caption{ Example of \cref{alg:graph}. Start at both ends, each trajectory finds the largest reachability region avoiding forbidden areas, then repeats the process on remaining segments to build secure checkpoints.}\label{fig:checkpoint-generate}
\end{figure}

\subsubsection{Cross-trajectory edges}\label{sec:cross-traj-edges}
To enable co-observation across different sub-teams at checkpoints, we search for available connection paths (\emph{cross-trajectory edges}) between checkpoints on different trajectories, allowing robots to deviate from one sub-team to perform co-observation with a different sub-team. 

Cross-trajectory edges $E_c = (v_1, v_2)$ define viable paths between two reference trajectories, where $\cI_{v_1}\neq\cI_{v_2}$ and at least one of $v_1 $ and $v_2$ corresponds to a security checkpoint $\union_p V_p$. The cross-trajectory edges must also adhere to reachability constraints $\mathcal{E}(q_{v_1}, q_{v_2}, t_{v_1},t_{v_2}) \intersect F = \emptyset$ to ensure that no deviations into forbidden regions can occur  while switching trajectories.

\begin{figure}[htbp]
    \centering
  \subfloat[Cross-trajectory edge generation\label{fig:two-edges}]{\includegraphics[width=0.45\linewidth,valign=c]{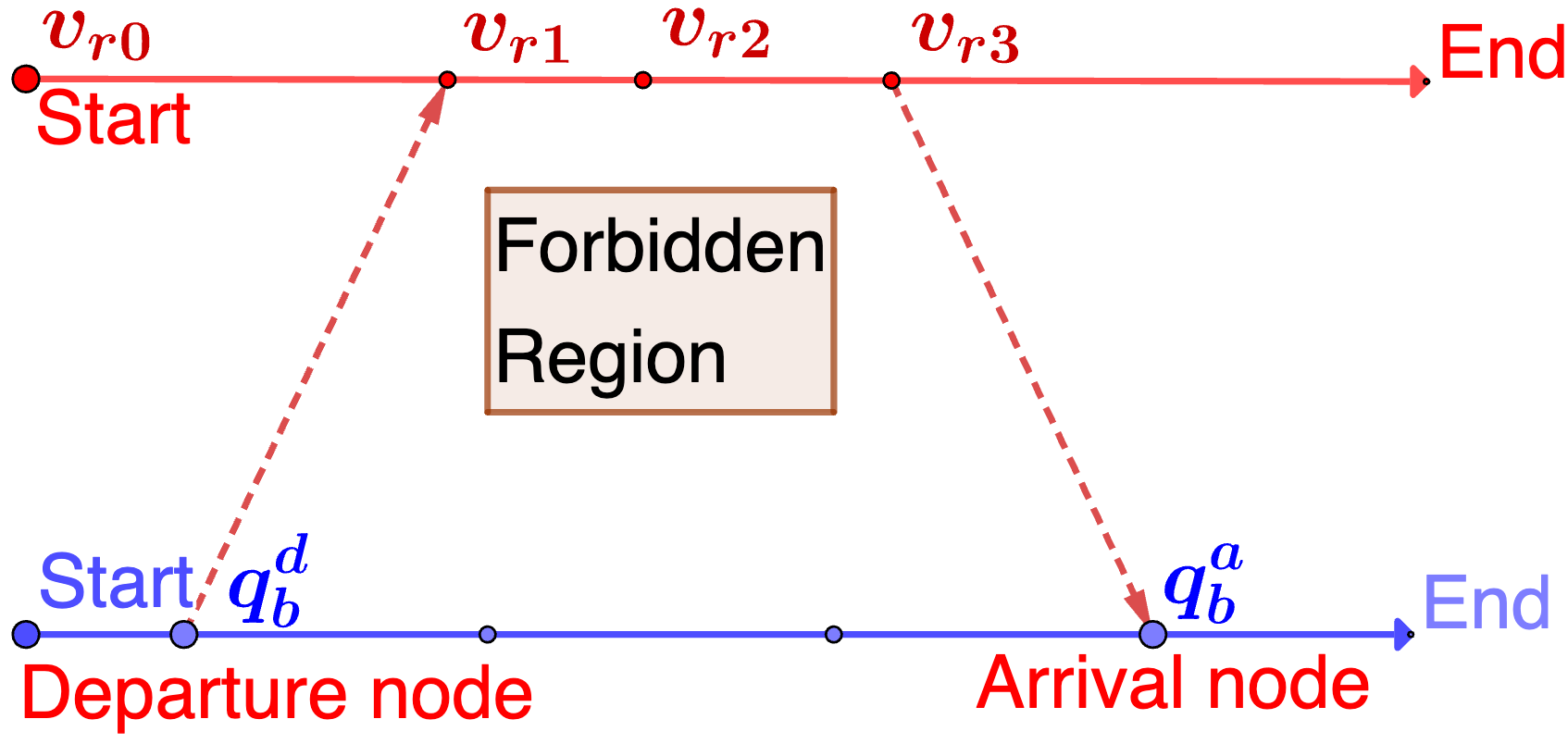}}
  \subfloat[Full checkpoint graph\label{fig:security-graph-generate}]{\includegraphics[width=0.45\linewidth,valign=c]{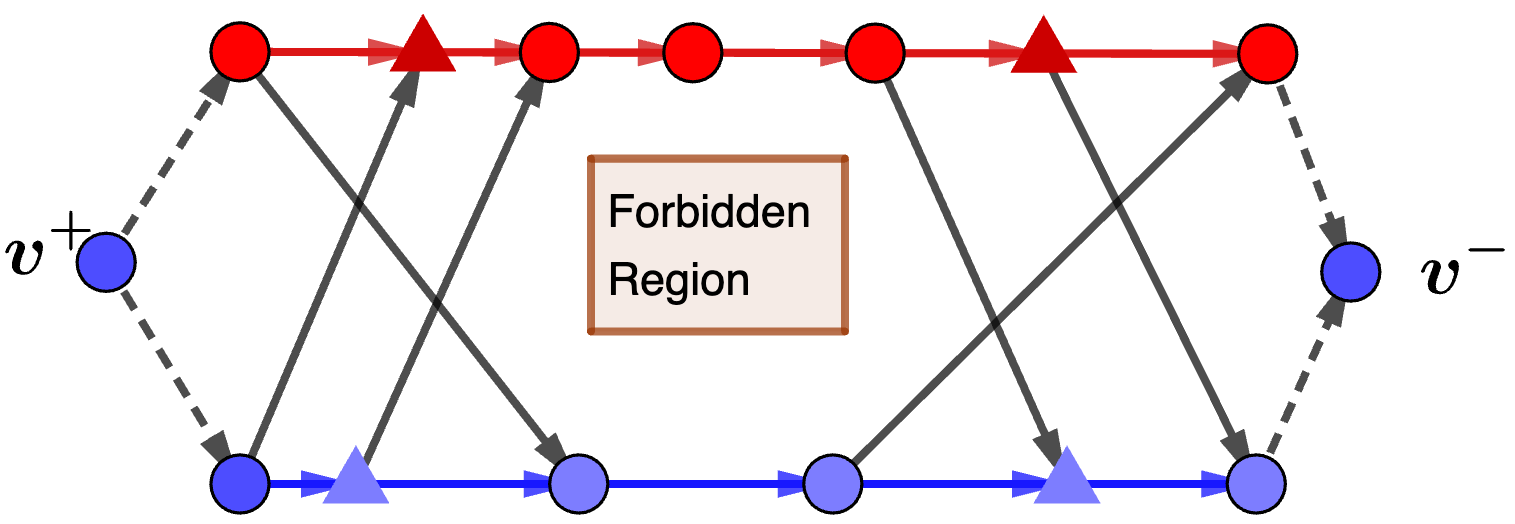}}
  \caption{ (\ref{fig:two-edges}) The latest departure node $q^{d}_{b}$ found for $v_{r1}$ and the earliest arrival node $q^{a}_{b}$ found for $v_{r3}$. (\ref{fig:security-graph-generate}) A full checkpoint graph, where circles are checkpoints $V_p$, triangles are connection nodes $V_c$, colored arrows (red and blue) are trajectory edges $E_t$ and black arrows are cross-trajectory edges $E_c$.}
\end{figure}

We first find trees of feasible paths between each security checkpoint and all other trajectories using position information alone (ignoring, for the moment, any timing constraint). 
More precisely, for each checkpoint $v_p \in \union_p V_p$, we use \rrtstar~ to find all feasible, quasi-optimal paths from $q_{v_p}(t_{v_p})$ to all the waypoints $\{q_{r}(t_{{r}_{i}})\}$ on the reference trajectories of all other sub-teams $\cI_{r} = \cI \setminus \cI_{p}$.
Then, to prune these trees, we consider the time needed to physically travel from one trajectory to the other while meeting other robots at the two endpoints. This is done by calculating the minimal travel time $t_{\textrm{path}}=\texttt{Cost}(q)/v_{max}$ for a robot to traverse each path. Two types of connecting nodes can be found.

\begin{description}
\item[Arrival nodes] are waypoints on $\{q_r\}$ where robots from sub-team $\cI_p$, deviating at $v_p$, can meet with sub-team $\cI_r$ at $(q_r(t_{r_i}),t_{r_i})$. For these, \rrtstar{} must have found a path from $v_p$ to $q_{r}(t^{a}_r)$ with $t_{p}+t_{\textrm{path}}<t_{{r}_{i}}$, and $\mathcal{E}(q_{r}(t_{{r}_{i}}),q_{p},t_{{r}_{i}},t_{p}) \intersect F = \emptyset$.

For each trajectory $r\neq p$, we define the \emph{earliest arrival node} from $v_p$ to $v_{ea} = (q_{r}(t^{a}_r),t^{a}_r)$ as the arrival node characterized by the minimum $t^{a}_r$ discovered. 

\item[Departure nodes] are the waypoints on $\{q_r\}$ such that a robot from sub-team $\cI_r$, deviating from $(q_{r}(t_{{r}_{i}}),t_{{r}_{i}})$, can meet with robots in sub-team $\cI_p$ at $v_{p}$. For these nodes, \rrtstar{} must find a path from $v_p$ to $q_{r}(t_{{r}_{i}})$ for $v_p$ if  $t_{p}>t_{{r}_{i}}+t_{\textrm{path}}$ and $\mathcal{E}(q_{p}, q_{r}(t_{{r}_{i}}),t_{p}, t_{{r}_{i}}) \intersect F = \emptyset$. 

For each trajectory $r\neq p$, we define the \emph{latest departure node} $v_{ld}=(q_{r}(t^{d}_r),t^{d}_r)$ as the departure node characterized by the maximum $t^{d}_r$ discovered.
\end{description}

The set $V_c$ contains all the \emph{latest departure} and \emph{earliest arrival} nodes (if they are not identified as checkpoints already); the corresponding paths are added as cross-trajectory edges $E_q$. A toy example is shown in \cref{fig:two-edges}.

\subsubsection{In-trajectory edges}\label{sec:Graph-intro}
We group $V =\{ v^i_p,\dots\}_{p=1}^{N_p}$ by sub-teams and arrange them in ascending order of their timestamps $t_{v^i_{p}}$. For each sub-team, consecutive vertices $v^i_{p}$ and $v^{i+1}_{p}$ are connected by adding \emph{in-trajectory edges} $\{(v^i_{p}\rightarrow v^{i+1}_{p})\}$ to $E_q$, representing the corresponding segment of the planned trajectory. Examples are shown in \cref{fig:security-graph-generate}. 

\subsection{Co-observation planning problem}
In this section, we formulate the cross-trajectory planning problem as a network multi-flow problem, and solve it using mixed-integer linear programs (MILP). We assume that $n_p$ robots are dedicated (one in each sub-team) to follow the reference trajectory (named \emph{reference robots}). The goal is to plan the routes of the $n-n_p$ additional \emph{cross-trajectory robots} dedicated to cross-trajectory co-observations, and potentially minimizing the number of cross-trajectory robots needed. 

\begin{remark}
Note that we assign fixed roles to robots for convenience in explaining the multi-flow formulation. In practice, after a cross-trajectory robot joins a team, it is considered interchangeable and could switch roles with the reference robot of that trajectory. 
\end{remark}

To formulate the problem as a network multi-flow problem, we augment the checkpoint graph to a flow graph. A \emph{virtual source} node $v^{+}$ and a \emph{virtual source} node are added to the vertices $V= (\union_p V_p) \union V_c \union \{v^{+},v^{-}\}$. Additional directed edges from $v^{+}$ to all the start vertices, and from all end vertices to $v^{-}$ are added $E = E_{q} \union \{(v^{+},v^{0}_{p})\}_{p} \union \{( v^{T}_{p},v^{-})\}_{p}$ with $v^0_p$ and $v^T_p$ representing the start and end vertices of sub-team $\cI_p$ (dashed arrows in~\cref{fig:example-cross-traj}).

The path of a robot $k$ all starts from $v^{+}$ and ends at $v^{-}$, and are represented as a flow vector $\vf^{k} = \{ f^{k}_{ij} \}$, where $f^{k}_{ij} \in \{1,0\}$ is an indicator variable representing whether robot $k$'s path contains the edge $v_{i}\to v_{j}$. 
The planning problem can be formulated as a vertex path cover problem on $G_{q}$, i.e., as finding a set of paths $F=[\vf_{1},\dots, \vf_{\cK}]$ for cross-trajectory robots such that every checkpoint in $\union_p V_{p}$ is included in at least one path in $F$ (to ensure co-observation at every checkpoint as required by \cref{rmk:revised-security}). 

Technically, we can always create a trivial schedule that involves only co-observations between members of the same team; this, however, would make the solution more vulnerable in the case where multiple agents are compromised in the same team. In this paper we explicitly consider only the single-attacker scenario, multi-attacks can be potentially handled by taking advantage of the \emph{decentralized blocklist protocol} introduced in \cite{wardega2023byzantine}. For this reason, we setup the methods presented below to always prefer \emph{cross-trajectory co-observation} when feasible.

Finally, edges from the virtual source and to the virtual sink should have zero cost, to allow robots to automatically get assigned to the starting point that is most convenient for the overall solution (lower cost when taking cross-trajectory edges).
These requirements are achieved with the weights for edges $(v_{i},v_{j})\in E$ defined as:
\begin{equation}
	w_{i,j}=\begin{cases}
	-w_{t} & \cI_{v_{i}}=\cI_{v_{j}}, (v_{i},v_{j})\in E_{q}\\
	w_{c} & \cI_{v_{i}} \neq \cI_{v_{j}}, (v_{i},v_{j})\in E_{q}\\
	0 &  (v_{i},v_{j})\in E / E_{q} 
	\end{cases}
\end{equation}
where $w_{c} > w_{t}$. 

With the formulation, the planning problem is written as an optimization problem, where the optimization cost balances between the co-observation performance and the total number of flows (cross-trajectory robots) needed:
 \begin{subequations} \label{eq:flow-coverage-problem}
     \begin{align}
        \min_{F} &\sum^{\cK}_{k} \sum_{(+i)\in E} f^{k}_{+i} - \rho \sum^{\cK}_k \sum_{(ij)\in E} w_{ij} f^k_{ij} \label{eq:flow-cost}\\
        s.t. & \sum_{\{h:(hi) \in E\}}f^k_{hi}=\sum_{\{j:(ij) \in E\}}f^k_{ij},  \forall k,\forall v \in V_{q} / \{v^{+}, v^{-}\}  \label{eq:FlowBalanceConstraint}\\
        & \sum^{\cK}_k\sum_{\{i:(ij)\in E \}} f^k_{ij} \geq 1, \forall v_{j} \in \{V^{s}_{p}\} \label{eq:FlowCoverage}\\
        & f^k_{ij} \in \{0,1\} ,  \forall (ij)\in E ,\label{eq:SingleFLowCapacity}
     \end{align}
 \end{subequations}
where, for convenience, we used $(ij)$ to represent the edge $(v_{i},v_{j})$, and $(+i)$ to represent the edge $(v^{+},v_{i})$.

The first term in the cost \eqref{eq:flow-cost} represents the total number of robots used, and the second term represents the overall co-observation performance (defined as the total number of cross-trajectory edges taken by all flows beyond the regular trajectory edges); the constant $\rho$ is a manually selected penalty parameter to balance between the two terms. 
 The flow conservation constraint \eqref{eq:FlowBalanceConstraint} ensures that the amount of flow entering and leaving a given node $v$ is equal (except for $v^{+}$ and $v^{-}$). The flow coverage constraint \eqref{eq:FlowCoverage} ensures that all checkpoints $ \{V^{s}_{p}\}$ have been visited and can support a co-observation. The checkpoint graph $G$ is acyclic, placing this problem in complexity class $P$; thus, it can be solved in polynomial time \cite{1702662}. 

\subsection{Co-observation performance}
The network flow problem \eqref{eq:flow-coverage-problem} is guaranteed to have a solution for $\cK=N_{p}$ where all $\cK$ robots follow the reference trajectory ($f^{k}_{ij}=1, \forall \cI_{v_{i}}=\cI_{v_{j}}=k$). 
It is possible for the resulting flows to have a subset of flows $F_{e} \in F$ that is empty, i.e. $f_{ij}=0, \forall (v_{i},v_{j})\in E$, $f\in F_{e}$; these flows will not increase the cost and can be discarded from the solution.  

The constant $\rho$ selects the trade-off between the number of surveillance robots and security performance. An increase in robots generally enhances security via cross-trajectory co-observations; this also increases the complexity of the coordination across robots. To identify the minimum number of robots necessary, we propose an iterative approach in which we start with $\cK=1$ and gradually increase it until $F$ contains an empty flow, indicating the point where further additions of robots do not improve performance. 

\begin{remark} 
The value of row $\rho$ is upper-bounded such that the second term for each single flow in \eqref{eq:flow-cost} is always smaller than one, i.e., $ \rho \sum_{(ij)\in E} w_{ij} f^k_{ij}\leq 1$. Otherwise, the iteration continues indefinitely as adding additional flow introduce a negative term $\sum_{(+i)\in E} f^{k}_{+i} - \rho\sum_{(ij)\in E} w_{ij} f^k_{ij} = 1- \rho\sum_{(ij)\in E} w_{ij} f^k_{ij}<0$ which always makes the cost \eqref{eq:flow-cost} smaller.
\end{remark}

\subsection{Result and simulation}

\begin{figure}
  \centering
  \subfloat[Cross-trajectory co-observation plan \label{fig:previous_result_plan}]{\includegraphics[width=0.47\linewidth, trim = 2cm 1cm 1cm 1cm,clip,valign=b]{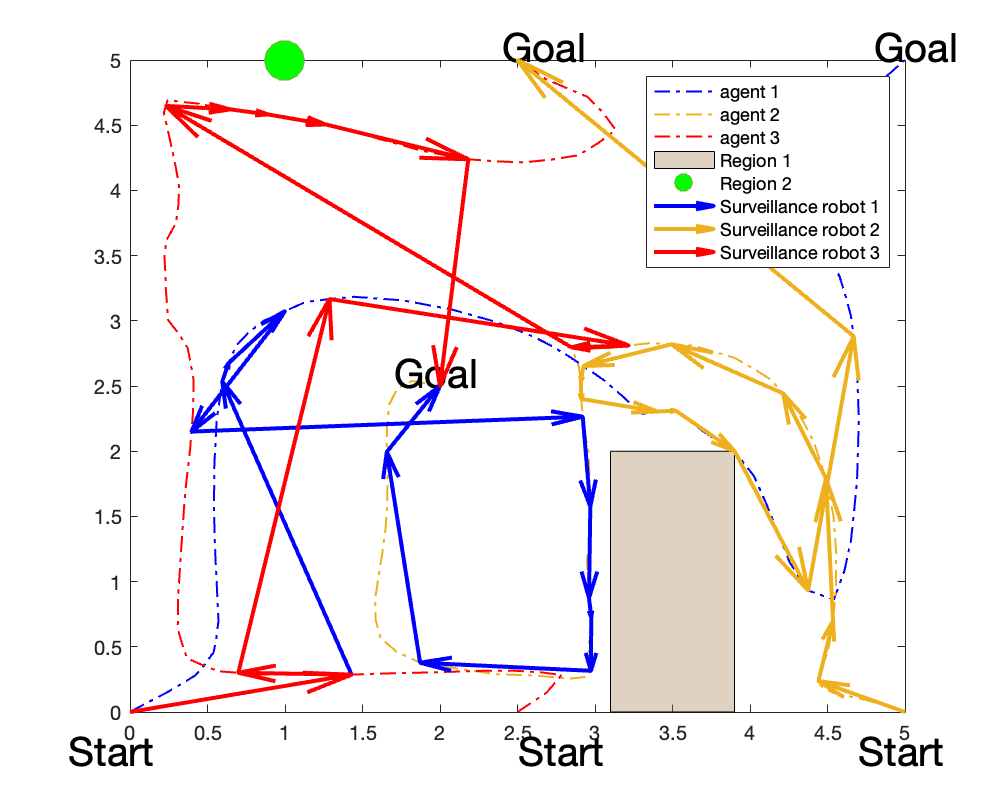}}
  \subfloat[Checkpoint graph and result flows \label{fig:previous_result_graph}]{\includegraphics[width=0.47\linewidth, trim = 10cm 3cm 7cm 3cm,clip,valign=b]{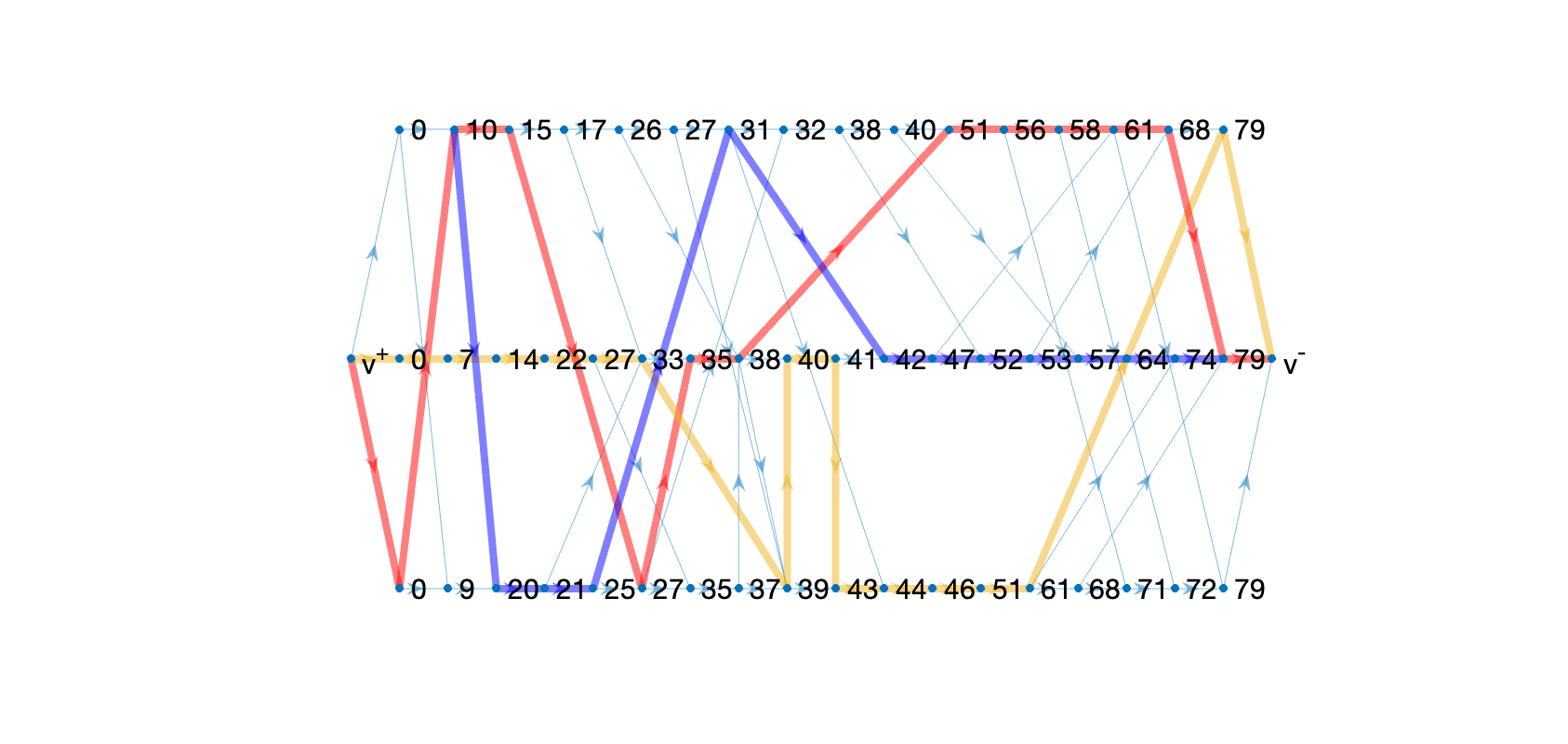}}
  \caption{(\ref{fig:previous_result_plan}) The cross-trajectory co-observation plan is shown as arrows on top of the original plan in \cref{fig:previous_result_plan}. (\ref{fig:previous_result_graph}) The checkpoint graph and resulting flows highlighted.}
  \label{fig:3-team-ctco}
\end{figure}

We first test the proposed method for the example application in \cref{fig:example-application,fig:ReachabilitySimulation}, using the same setup in \cref{sec:ADMM-simulation}. Using the parameters $w_{c}=10$, $w_{t}=1$ and $\rho = 0.01$, the result returns a total of $\cK=3$ surveillance robots with cross-trajectory plan shown in \cref{fig:3-team-ctco}. 
The flows derived from the solution of the optimization problem \eqref{eq:flow-coverage-problem} are highlighted in the graph \cref{fig:previous_result_graph}, where each horizontal line represents the original trajectory of a sub-team and the number on each vertex $v_{i}$ represents the corresponding time $t_{i}$. The planning result in the workspace is shown in \cref{fig:previous_result_plan} as dash-dotted arrows with the same color used for each flow in \cref{fig:previous_result_graph}. Compared with the result in \cref{fig:ReachabilitySimulation}, it is easy to see that there is a significant improvement in map coverage in \cref{fig:previous_result_plan}. At the same time, unsecured deviations to the forbidden regions of robots are secured through cross-trajectory observations. 

The problem shows no solution for $\cK\leq 2$. For cases $\cK=3$, the problem returns the optimal result as shown in \cref{fig:Cross-trajectory-result}. If we further increase $\cK>3$, we do not get a better result; instead, the planner will return four flows with the rest $\cK-3$ flows empty. 

We then test the proposed method for 4-team and 7-team cases with results shown in \cref{fig:Cross-trajectory-result}, where four trajectories are provided for a map exploration task with no security-related constraints (co-observation schedule and reachability). We have a $10m\times10m$ task space, three forbidden regions (rectangle regions in \cref{fig:Result-plan}), and robots with a max velocity of $0.5m/dt$. 

\begin{figure}
  \centering
  \subfloat[4-agent case\label{fig:Result-plan}]{\includegraphics[width=0.47\linewidth,trim = 1cm 1cm 1.5cm 0cm, clip,valign=c]{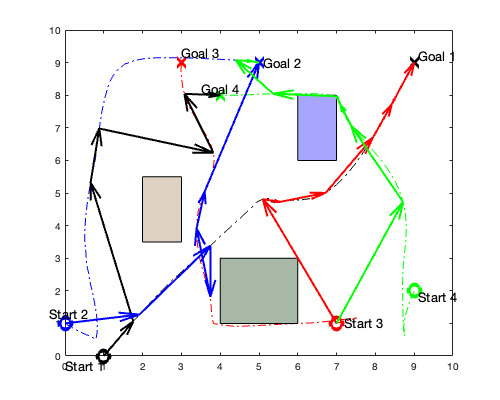}}
  \subfloat[7-agent case\label{fig:Result-plan-7-team}]{\includegraphics[width=0.47\linewidth,trim = 3.5cm 2cm 1.5cm 0cm, clip,valign=c]{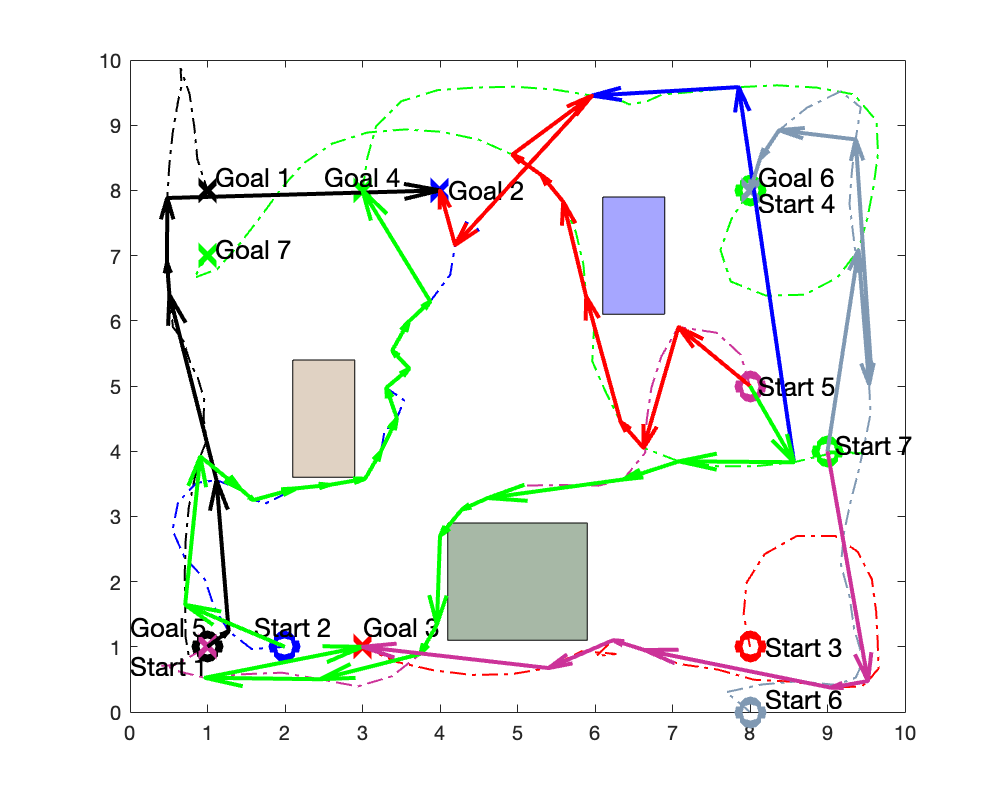}}
  \caption{(\ref{fig:Result-plan}) Cross-trajectory co-observation result of a 4 sub-teams task. (\ref{fig:Result-plan-7-team}) Cross-trajectory co-observation result of a 7 sub-teams task.}\label{fig:Cross-trajectory-result}
\end{figure}

\section{Summary}\label{sec:summary}

This paper introduces security measures for protecting Multi-Robot Systems (MRS) from plan-deviation attacks. We establish a mutual observation schedule to ensure that any deviations into forbidden regions break the schedule, triggering detection. Co-observation requirements and reachability analysis are incorporated into the trajectory planning phase, and formulated as constraints for the optimal problem. In scenarios where a secured plan is infeasible or enhanced task performance is necessary, we propose using redundant robots for cross-trajectory co-observations, offering the same security guarantees against plan-deviation attacks. However, the time-sensitive and proximity-dependent nature of co-observation necessitates a focus on collision avoidance during planning. Future work will integrate collision avoidance and explore dynamic duty assignments within sub-teams, increasing the challenge for potential attackers.

\appendix
\subsection{Transformation of a reachability ellipsoid to canonical frame}\label{apx:transformation}
The ellipse $\cE$ expressed in $\cF_\cE$ is given by $\Eframe{\cE}=\{\Eframe{q}\in\real{m}:d(q^\cE_1,\Eframe{q})+d(\Eframe{q},q^\cE_2)<2a\}$, with foci $q^\cE_1,q^\cE_2$ in $\cF_\cE$ defined as
$q^\cE_1=\bmat{c & 0 & 0}\transpose, q^\cE_2=\bmat{-c & 0 & 0}\transpose$, and semi-axis distance $c=\frac{\norm{q_2-q_1}}{2}$.
\begin{definition}
  The \emph{reachability ellipsoid $\cE$ in the canonical frame} is defined as the zero-level set of the quadratic function
  \begin{equation}\label{equ:standard-ellipse}
    \Eframe{E}(\Eframe{q}) = \Eframe{q}\transpose Q \Eframe{q} - 1
  \end{equation}
  where
  \begin{equation}
    Q = \diag(a^{-2},b^{-2},b^{-2}),
  \end{equation}
  and $b = \sqrt{a^2-c^2}$.
  The ellipse parameters $a$, $b$ represent the lengths of the major axes.
\end{definition}
\begin{lemma}
The original ellipse $\cE$ in $\cF$ can be expressed as the zero level set of the quadratic function
   \begin{equation}\label{eq:ellpsoid from canonical}
     \Fframe{E}(q) = (q-o_\cE)\transpose H\transpose Q H (q-o_\cE) - 1
     \end{equation}
\end{lemma}
\begin{proof}
    The claim follows by substituting  \eqref{eq:transformations} into \eqref{equ:standard-ellipse}, and from the definition of $R$ and $o$.
\end{proof}
\subsection{Householder rotations}\label{sec:householder}
We define a differentiable transformation to a canonical ellipse that is used in the derivation of the reachability constraints in \cref{sec:reachability,sec:ellipse-region-constraint}. This transformation includes a rotation derived from a modified version of Householder transformations \cite{householder1958unitary}. 
We call our version of the operator a \emph{Householder rotation}. In this section we derive Householder rotations and their differentials for the 3-D case; the 2-D case can be easily obtained by embedding it in the $z=0$ plane.
\begin{definition} Let $\nu_1$ and $\nu_2$ be two unitary vectors ($\norm{\nu_1}=\norm{\nu_2}=1$). Define the normalized vector $u = \frac{\nu_1+\nu_2}{\norm{\nu_1+\nu_2}}$,
the \emph{Householder rotation} $H(\nu_1,\nu_2)$ is defined as
  \begin{equation}\label{eq:H definition}
    H(\nu_1,\nu_2) = 2 u u\transpose-I.
  \end{equation}
\end{definition}
Here $H$ is a rotation mapping $\nu_1$ to $\nu_2$, as shown by the following.
\begin{proposition}\label{prop:HProperty}
 The matrix $H$ has the following properties:
  \begin{enumerate}
  \item\label{it:rotation} It is a rotation, i.e. (a) $H\transpose H=I$; (b) $\det(H)=1$
  \item\label{it:transformation} $\nu_2=H \nu_1$.
  \end{enumerate}
\end{proposition}
\begin{proof}
  For subclaim~\ref{it:rotation}a,  since $u\transpose u=1$:
  \begin{equation}
    H\transpose H=H^2=4uu\transpose u u\transpose - 4uu\transpose +I^2=I.
  \end{equation}

  For subclaim~\ref{it:rotation}b, let $U=\bmat{u & u^\bot_{1} & u^\bot_{2}}$, where $u^\bot_{1},u^\bot_{2}$ are two orthonormal vectors such that $I=UU\transpose=uu\transpose + u^\bot_{1}(u^\bot_{1})\transpose+u^\bot_{2}(u^\bot_{2})\transpose$; then, substituting $I$ in \eqref{eq:H definition}, we have that the eigenvalue decomposition of $H$ is given by
  \begin{equation}
    H=U\diag(1,-1,-1) U\transpose.
  \end{equation}
  Since the determinant of a matrix is equal to the product of the eigenvalues, $\det(H)=1$.

  For subclaim~\ref{it:transformation}, first note that $Hu=2uu\transpose u - u=u$.
  It follows that the sum of $\nu_1$ and $\nu_2$ is invariant under $H$:
  \begin{equation}\label{equ:H(u1+u2)}
    H(\nu_1+\nu_2)=H u \norm{\nu_1+\nu_2}
    = u\norm{\nu_1+\nu_2} =\nu_1+\nu_2,
  \end{equation}
  and that their difference is flipped under $H$:
  \begin{equation}\label{equ:H(u1-u2)}
    H(\nu_1-\nu_2) = 2uu\transpose (\nu_1-\nu_2) - (\nu_1-\nu_2)^2 = -(\nu_1-\nu_2).
  \end{equation}
  Combining (\ref{equ:H(u1+u2)}) and (\ref{equ:H(u1-u2)}) we obtain
  \begin{equation}
    H\nu_1 = \frac{1}{2}\bigl(H(\nu_1+\nu_2)+H(\nu_1-\nu_2)\bigr)
    = \nu_2
  \end{equation}
\end{proof}
We compute the differential of $H$ implicitly using its definition \eqref{equ:dt_to_dx}. We use the notation $\cross{v}:\real{3}\to\real{3\times 3}$ to denote the matrix representation of the cross product with the vector $v$, i.e.,
\begin{equation}
  \bmat{v_1\\v_2\\v_3}_\times= \bmat{0 & -v_3 &v_2\\v_3 & 0 & -v_1\\-v_2 & v_1 & 0},
\end{equation}
such that $[v]_\times w=v\times w$ for any $w\in\real{3}$. 
\begin{proposition}\label{prop:Hderivitive}
  Let $\nu_1(t)$ represent a parametric curve. Then
  \begin{equation}
    \dot{H}=H\cross{-2M\dot{\nu}_{\cF}},
  \end{equation}
  where the matrix $M\in\real{3\times3}$ is given by
  \begin{equation}
    M=[u]_{\times}  \frac{ \left( I - u u\transpose \right) \left( I- \nu_1 \nu_1\transpose \right)} {\norm{u'} \norm{\nu_1}}.
  \end{equation}
\end{proposition}
\begin{proof}
    From the definition of $H$ in \eqref{eq:H definition}, we have
    \begin{equation} \label{equ:H_dot original}
      \dot H =   2(\dot u u\transpose + u \dot u\transpose)
    \end{equation}

    Recall that $\dot{u}=\frac{1}{\norm{u'}}(I-uu\transpose)\dot{u}'$ (see, for instance, \cite{Tron:Arxiv14}), which implies $(I-uu\transpose)\dot{u}'=\dot{u}'$. It follows that $\dot{u}$ flips sign under the action of $H\transpose$:
    \begin{multline}
      H\transpose \dot u = (2u u\transpose-I)\frac{ \left( I - u u\transpose \right) } {\left\|u'\right\|} \dot u' \\
      =\frac{1}{\norm{u'}} (2u u\transpose-I-2u u\transpose u u\transpose+u u\transpose)\dot{u}'\\
      = -\frac{1}{\norm{u'}} (I-u u\transpose)\dot u'
      = -\dot u
    \end{multline}

    Inserting $HH\transpose=I$ in (\ref{equ:H_dot original}), we have
    \begin{equation}
      \begin{split}
        \dot H =  &  2H H\transpose(\dot u u\transpose + u \dot u\transpose)
        =  2 H (-\dot u u\transpose + u \dot u\transpose)\\
        =  &  -2H [[u]_{\times} \dot u]_{\times} \\
        = &  -2 H \left[ [u]_{\times}  \frac{ \left( I - u u\transpose \right) \left( I- \nu_1 \nu_1\transpose \right)} {\left\|u'\right\| \left\|\nu_1\right\|} \dot{\nu}_\cF\right]_{\times}\\
        = & -2 H \cross{M \dot{\nu}_\cF},
      \end{split}
    \end{equation}
    which is equivalent to the claim.
  \end{proof}

  \subsection{Proof of proposition \ref{prop:Ellipse2PointDiff}}\label{proof:Ellipse2PointDiff}
      To make the notation more compact, we will use $\partial_q f$ instead of $\partial_{\left[\begin{smallmatrix}q_1\\q_2\end{smallmatrix}\right]} f$ for the remainder of the proof.
    The differential of \eqref{equ:Point2EllipseProjection} can be represented as:
    \begin{equation}\label{equ:dPi_dt}
      \begin{split}
        \dot \pi_{p\cE} = &  \dot H^{-1}SH (q_{avoid} - o)  + H^{-1}\dot S H (q_{avoid} - o) \\
        &+ H^{-1}S \dot H (q_{avoid} - o) + (H^{-1}SH -I)\dot o\\
      \end{split}
    \end{equation}

    where
    \begin{equation}\label{equ:S_dot}
      \begin{split}
        \dot S  =& - S^2 (Q \dot s + s \dot Q)\\
        =& - S^2 (Q \partial_q s \dot q - \partial_b Q \partial_q b \dot q)
      \end{split}
    \end{equation}
    where
    \begin{equation}
      \partial_b Q = 2\frac{s}{b^3} \diag\{0,1,1\}
    \end{equation}
    To compute the derivative $\partial_q \pi$, we need the expression of $\partial_q b$, $\partial_q o$ and $\partial_q s$; the first two can be easily derived using the equations above:
    \begin{align}
      \partial_q b &= \frac{1}{4b}\bmat{q_1-q_2,q_2-q_1}\transpose\\
      \partial_q o &= \bmat{I/2,I/2}\transpose
    \end{align}

    In order to get $\partial_q s$, we use the fact that $F\bigl(s(q)\bigr)=0$ for all $q$; hence $F\bigl(\tilde{q}(t)\bigr)\equiv 0$, and $\partial_q F = 0$. We then have:

    \begin{equation}
      0=\dot F =  2q\transpose Q' \dot q + q\transpose \partial_s Q' q \dot s + q\transpose \partial_bQ' q \dot b
    \end{equation}
    where
    \begin{equation}
      \partial_s Q'= -\diag\left(\frac{2a^2}{(s+a^2)^3},\frac{2b^2}{(s+b^2)^3},\frac{2b^2}{(s+b^2)^3}\right).
    \end{equation}
    By moving term $\dot s$ to the left-hand side we can obtain:
    \begin{multline}\label{equ:s_dot}
      \dot s =  (q\transpose \partial_s Q' q)^{-1} (2q\transpose Q' \dot q + q\transpose \partial_b Q' q \dot b)\\
      =  (q\transpose \partial_s Q' q)^{-1} (-4q\transpose Q' H[U\dot q]_\times (q_{avoid} - o) \\
      -2q\transpose Q'H\dot o + q\transpose \partial_b Q' q \dot b)\\
      =  (q\transpose \partial_s Q' q)^{-1} (-4q\transpose Q' H[q_{avoid} - o]_\times U\dot q \\
      -2q\transpose Q'H\dot o + q\transpose \partial_b Q' q \dot b)
    \end{multline}

    The second term of equation (\ref{equ:dPi_dt}) turns into:
    \begin{multline}
      H^{-1}\dot S H (q_{avoid} - o)
      = - H^{-1} Q' q \dot s - s H^{-1} S^2 \partial_b Q q \dot b\\
      =   \big( (q\transpose \partial_s Q' q)^{-1} H^{-1} Q' q q\transpose  (4Q' H[q_{avoid} - o]_\times U  \\
      + 2Q' H\partial_q o - \partial_b Q' q q \partial_q b) -  s H^{-1} S^2 \partial_b Q q \partial_q b\big) \dot q\\
    \end{multline}

    Thus, equation \eqref{equ:dPi_dt} could be written as:
    \begin{multline}
      \dot \pi_{p\cE} = \big(-2 H [ SH(q_{avoid}-o)]_{\times}U   \\
      + \left( (q\transpose \partial_s Q' q)^{-1} H^{-1} Q' q q\transpose  (4Q' H[q_{avoid} - o]_\times U \right. \\
      \left.+ 2Q' H\partial_q o - \partial_b Q' q q \partial_q b) -  s H^{-1} S^2 \partial_b Q q \partial_q b\right) \\
      -2H^{-1} S H[q_{avoid}-o]_{\times} U  \\
      + (H^{-1}SH -I)\partial_q o \big) \dot q,
    \end{multline}
    from which the claim follows.
  
  \subsection{Proof of proposition \ref{prop:dpi_ne_dt}}\label{proof:dpi_ne_dt}
    We first need to derive $\dot{d}_\cE$ and $\dot{d}_{\cE t}$

    \begin{equation}
      \dot d_\cE = -n\transpose \partial_q o \dot q
    \end{equation}
    \begin{equation}\label{equ:dot det}
      \begin{split}
        \dot d_{\cE t} =&  (\dot n_\cE\transpose Q^{-1} n_\cE + n_\cE\transpose \dot Q^{-1} n_\cE + n_\cE\transpose Q^{-1} \dot n_\cE) /\sqrt{n_\cE\transpose Q^{-1} n_\cE}\\
        = & (\sqrt{n_\cE\transpose Q^{-1} n_\cE})^{-1}\left(-2n\transpose H[Q^{-1}n_\cE]_\times U \right.\\
        &\left. + n_\cE \transpose \partial_b Q^{-1} n_\cE \partial_q b  -2 n_\cE Q^{-1}H[n]_\times U \right) \dot q
      \end{split}
    \end{equation}
    Next, we need to derive  $\dot p_{t1}$, $\dot p_{t2} $ and $\dot p_\cL$. Since $p_\cL$ could be written as
    \begin{equation}
      p_\cL = \frac{d_\cE Q^{-1} n_\cE}{ d_{\cE t} ^2},
    \end{equation}
    we have
    \begin{multline}\label{equ:dpl_dt}
        \dot p_\cL =  \left( (-\frac{d_{\cE t} n\transpose \partial_q o -2d_\cE \partial_q d_{\cE t}}{d_{\cE t}^3} )Q^{-1}n_\cE \right.\\
        \left .+ \frac{d_\cE\partial_b Q^{-1} n_\cE \partial_q b -  2d_\cE Q^{-1}H[n]_\times U}{d_{\cE t}^2}\right) \dot q
    \end{multline}
    \begin{multline}\label{equ:dplt_dt}
        \dot p_{1} =  \left(-\frac{Q^{-1}n_\cE \partial_q d_{\cE t} }{d_{\cE t}^2} \right.\\
         \left. + \frac{\partial_b Q^{-1}n_\cE \partial_q b -  2Q^{-1}H[n]_\times U }{d_{\cE t}} \right) \dot q
    \end{multline}
    subtracting $\dot q$ from (\ref{equ:dpl_dt}) and (\ref{equ:dplt_dt}), we can derive the result shown in (\ref{equ:ProjectPoint})

{\small
\bibliographystyle{IEEEtran}
\bibliography{ADMM_planning,ACC,tron,ziqi,reachability}
}

\end{document}